\newcommand{\dreal}{\delta_\mu^{\pi,p}(s,a)} 
\newcommand{\ddreal}{\delta_{s,a}^{\pi,p}(s',a')}
\newcommand{\ddfake}{\delta_{s,a}^{\pi,\widehat{p}}(s',a')}
\newcommand{\qreal}{Q^{\pi,p}(s,a)}
\newcommand{\ints}{\int_\mathcal{S}}
\newcommand{\inta}{\int_\mathcal{A}}
\newcommand{\intt}{\int_\mathcal{T}}
\newcommand{\qfake}{Q^{\pi,\widehat{p}}(s,a)}
\renewcommand{\d}{\mathrm{d}}
\DeclareMathOperator*{\argmax}{arg\,max}
\DeclareMathOperator*{\argmin}{arg\,min}
\newtheorem{assumption}{Assumption}
\newtheorem{theorem}{Theorem}[section]
\newtheorem{corollary}{Corollary}[theorem]
\newtheorem{proposition}{Proposition}[section]
\newtheorem{lemma}[theorem]{Lemma}
\newtheorem{definition}[theorem]{Definition}
\newcolumntype{P}[1]{>{\centering\arraybackslash}p{#1}}
\newcolumntype{M}[1]{>{\centering\arraybackslash}m{#1}}
\algnewcommand{\LineComment}[1]{\State \(\triangleright\) #1}
\newcommand{\mathbr}[1]{\bm{\mathbf{#1}}}
\newcommand{\vtheta}{{\bm{\theta}}}
\DeclareRobustCommand{\eg}{e.g.,\@\xspace}                         
\DeclareRobustCommand{\ie}{i.e.,\@\xspace}                         
\DeclareRobustCommand{\wrt}{w.r.t.\@\xspace}
\newcommand{\E}{\mathop{\mathbb{E}}}
\newcommand{\citet}[1]{\citeauthor{#1} \shortcite{#1}}
\newcommand{\citep}{\cite}
\newcommand{\citealp}[1]{\citeauthor{#1} \citeyear{#1}}
\title{Gradient-Aware Model-based Policy Search}
\author{Pierluca D'Oro\thanks{Equal contribution.}, Alberto Maria Metelli\footnotemark[1] \\
{\bf \Large Andrea Tirinzoni, Matteo Papini, Marcello Restelli}\\
Dipartimento di Elettronica, Informazione e Bioingegneria, Politecnico di Milano\\
		Piazza Leonardo da Vinci, 32, 20133, Milano, Italy\\
		pierluca.doro@mail.polimi.it, \{albertomaria.metelli, andrea.tirinzoni, matteo.papini, marcello.restelli\}@polimi.it
}
\begin{document}

\maketitle

\begin{abstract}
Traditional model-based reinforcement learning approaches learn a model of the environment dynamics without explicitly considering how it will be used by the agent. In the presence of misspecified model classes, this can lead to poor estimates, as some relevant available information is ignored. 
In this paper, we introduce a novel model-based policy search approach that exploits the knowledge of the current agent policy to learn  an approximate transition model, focusing on the portions of the environment that are most relevant for policy improvement.
We leverage a weighting scheme, derived from the minimization of the error on the model-based policy gradient estimator, in order to define a suitable objective function that is optimized for learning the approximate transition model. Then, we integrate this procedure into a batch policy improvement algorithm, named Gradient-Aware Model-based Policy Search (GAMPS), which iteratively learns a transition model and uses it, together with the collected trajectories, to compute the new policy parameters. Finally, we empirically validate GAMPS on benchmark domains analyzing and discussing its properties.
\end{abstract}

\section{Introduction}
Model-Based Reinforcement Learning (MBRL,~\citealp{sutton2018reinforcement}; \citealp{nguyen2011model}) approaches use the interaction data collected in the environment to estimate its dynamics, with the main goal of improving the sample efficiency of Reinforcement Learning (RL,~\citealp{sutton2018reinforcement}) algorithms.
However, modeling the dynamics of the environment in a thorough way can be extremely complex and, thus, require the use of very powerful model classes and considerable amounts of data, betraying the original goal of MBRL.
Fortunately, in many interesting application domains (\eg robotics), perfectly capturing the dynamics across the whole state-action space is not necessary for a model to be effectively used by a learning agent~\cite{abbeel2006using,nguyen2009model,levine2014learning}. 
Indeed, a wiser approach consists in using simpler model classes, whose estimation requires few interactions with the environment, and focus their limited capacity on the most relevant parts of the environment. 
These parts could present a local dynamics that is inherently simpler than the global one, or at least easier to model using prior knowledge. 

The vast majority of MBRL methods employs a maximum-likelihood estimation process for learning the model~\cite{deisenroth2013survey}.
Nonetheless, the relative importance of the different aspects of the dynamics greatly depends on the underlying decision problem, on the control approach, and, importantly, on the policy played by the agent.
Recent work~\cite{farahmand2017value,farahmand2018iterative} shows that, in the context of value-based reinforcement learning methods, it is possible to derive a \emph{decision-aware} loss function for model learning that compares favorably against maximum likelihood. However, there exists no equivalent for policy-based methods, that are often preferred in the case of continuous observation/action spaces. Moreover, previous work fails at incorporating the influence of the current agent's behavior for evaluating the relative importance of the different aspects of the world dynamics.
For instance, suppose a learning agent acts deterministically in a certain region of the environment, possibly thanks to some prior knowledge, and has no interest in changing its behavior in that area; or that some regions of the state space are extremely unlikely to be reached by an agent following the current policy. There would be no benefit in approximating the corresponding aspects of the dynamics since that knowledge cannot contribute to the agent's learning process. 
Therefore, with a limited model expressiveness, an approach for model learning that explicitly accounts for the current policy and for how it will be improved can outperform traditional maximum likelihood estimation.

In this paper, motivated by these observations, we propose a model-based policy search~\cite{deisenroth2013survey,sutton2000policy} method that leverages awareness of the current agent's policy in the estimation of a forward model, used to perform policy optimization. Unlike existing approaches, which typically ignore all the knowledge available on the running policy during model estimation, we incorporate it into a weighting scheme for the objective function used in model learning. 
We choose to focus our discussion on the batch setting~\cite{lange2012batch}, due to its particular real-world importance. Nonetheless, extensions to the interactive scenario can be easily derived.
The contributions of this paper are theoretical, algorithmic and experimental. After having introduced our notation and the required mathematical preliminaries (Section~\ref{sec:background}), we formalize the concept of \textit{Model-Value-based Gradient} (MVG), an approximation of the policy gradient that combines real trajectories along with a value function derived from an estimated model (Section~\ref{sec:mvg}). MVG allows finding a compromise between the large variance of a Monte Carlo gradient estimate and the bias of a full model-based estimator. Contextually, we present a bound on how the bias of the MVG is related to the choice of an estimated transition model. In Section \ref{sec:gamps}, we derive from this bound an optimization problem to be solved, using samples, to obtain a gradient-aware forward model. Then, we integrate it into a batch policy optimization algorithm, named \textit{Gradient-Aware Model-based Policy Search} (GAMPS), that iteratively uses samples to learn the approximate forward model and to estimate the gradient, used to perform the policy improvement step. After that, we present a finite-sample analysis for the single step of GAMPS (Section~\ref{sec:theoreticalAnalysis}), that highlights the advantages of our approach when considering simple model classes. Finally, after reviewing related work in model-based policy search and decision-aware MBRL areas (Section~\ref{sec:related_works}), we empirically validate  GAMPS against model-based and model-free baselines, and discuss its peculiar features (Section \ref{sec:experiments}). The proofs of all the results presented in the paper are reported in Appendix~\ref{apx:proofs}.

\section{Preliminaries} \label{sec:background}
A discrete-time Markov Decision Process (MDP,~\citealp{puterman2014markov}) is described by a tuple $\mathcal{M} = (\mathcal{S}, \mathcal{A}, r, p, \mu, \gamma)$, where $\mathcal{S}$ is the space of possible states, $\mathcal{A}$ is the space of possible actions, $r(s,a)$ is the reward received by executing action $a$ in state $s$, $p(\cdot|s,a)$ is the transition model that provides the distribution of the next state when performing action $a$ in state $s$, $\mu$ is the distribution of the initial state and $\gamma \in [0,1)$ is a discount factor. When needed, we assume that $r$ is known, as common in domains where MBRL is employed (\eg robotic learning~\cite{deisenroth2011pilco}), and that rewards are uniformly bounded by $|r(s,a)| \le R_{\max} < +\infty$. The behavior of an agent is described by a policy $\pi(\cdot|s)$ that provides the distribution over the action space for every state $s$. Given a state-action pair $(s,a)$ we define the action-value function~\cite{sutton2018reinforcement}, or Q-function, as $Q^{\pi,p}(s,a) = r(s,a) + \gamma \ints p(s'|s,a) \inta \pi(a'|s') Q^{\pi,p}(s',a') \d s' \d a'$ and the state-value function, or V-function, as $V^{\pi,p}(s) = \E_{a \sim \pi(\cdot|s)} [Q^{\pi,p}(s,a)]$, where we made explicit the dependence on the policy $\pi$ and on the transition model $p$. The goal of the agent is to find an optimal policy $\pi^*$, \ie a policy that maximizes the \emph{expected return}: $J^{\pi,p} = \mathbb{E}_{s_0 \sim \mu} \left[ V^{\pi,p}(s_0) \right]$.

We consider a batch setting~\cite{lange2012batch}, in which the learning is performed on a previously collected dataset $\mathcal{D}= \left\{ \tau^{i} \right\}_{i=1}^N = \left\{ \left(s^{i}_0, a^{i}_0, s^{i}_1, a^{i}_1, ...,  s^{i}_{T_i-1}, a^{i}_{T_i-1}, s^{i}_{T_i} \right)\right\}_{i=1}^N $ of $N$ independent trajectories $\tau^{i}$, each composed of $T_i$ transitions, and further interactions with the environment are not allowed.
The experience is generated by an agent that interacts with the environment, following a \emph{known} behavioral policy $\pi_b$. 
We are interested in learning a parameterized policy $\pi_\vtheta$ (for which we usually omit the parameter subscript in the notation) that belongs to a parametric space of stochastic differentiable policies $\Pi_{\Theta} = \{ \pi_\vtheta :\vtheta \in \Theta \subseteq \mathbb{R}^d \}$. In this case, the gradient of the expected return \wrt $\vtheta$ is provided by the \textit{policy gradient theorem} (PGT)~\cite{sutton2000policy,sutton2018reinforcement}:
\begin{equation}
\begin{aligned}
\nabla_\vtheta J(\vtheta) = \frac{1}{1 - \gamma} \ints \inta & \dreal \nabla_\vtheta \log \pi (a | s)  \\ 
& \phantom{} \times \qreal \d s \d a,
\end{aligned}
\end{equation}
where $\dreal$ is the $\gamma$-discounted state-action distribution~\cite{sutton2000policy}, defined as $\dreal = (1-\gamma) \sum_{t=0}^{+\infty} \gamma^t \Pr(s_t=s,a_t=a | \mathcal{M},\pi)$. We call $\nabla_\vtheta \log \pi (a | s)$ the \textit{score} of the policy $\pi$ when executing action $a$ in state $s$.
Furthermore, we denote with $\delta_{s',a'}^{\pi,p}(s,a)$ the state-action distribution under policy $\pi$ and model $p$ when the environment is deterministically initialized by executing action $a'$ in state $s'$ and with $\zeta_\mu^{\pi,p}(\tau)$ the probability density function of a trajectory $\tau$. In batch policy optimization, the policy gradient is typically computed for a policy $\pi$ that is different from the policy $\pi_b$ having generated the data (\textit{off-policy} estimation~\cite{precup2000elegibility}). To correct for the distribution mismatch, we employ \textit{importance sampling}~\cite{kahn1953methods,mcbook}, re-weighing the transitions based on the probability of being observed under the policy $\pi$. Namely, we define the importance weight relative to a subtrajectory $\tau_{t':t''}$ of $\tau$, occurring from time $t'$ to $t''$, and to policies $\pi$ and $\pi_b$ as $\rho_{\pi/\pi_b}(\tau_{t':t''}) = \frac{\zeta_\mu^{\pi,p}(\tau_{t':t''})}{\zeta_\mu^{\pi_b,p}(\tau_{t':t''})} = \prod_{t=t'}^{t''} \frac{\pi(a_t|s_t)}{\pi_b(a_t|s_t)}$.

\section{Model-Value-based Gradient} \label{sec:mvg}
The majority of the model-based policy search approaches employ the learned forward model for generating rollouts, which are used to compute an improvement direction $\nabla_\vtheta J(\vtheta)$ either via likelihood-ratio methods or by propagating gradients through the model~\cite{deisenroth2011pilco}. Differently from these methods, we consider an approximation of the gradient, named \textit{Model-Value-based Gradient} (MVG), defined as follows.
\begin{definition}\label{thr:defMVG}
Let $p$ be the transition model of a Markov Decision Process $\mathcal{M}$, $\Pi_\Theta$ a parametric space of stochastic differentiable policies, $\mathcal{P}$ a class of transition models. Given $\pi \in \Pi_\Theta$ and $\widehat{p} \in \mathcal{P}$, the \emph{Model-Value-based Gradient (MVG)} is defined as:
\begin{equation}
    \label{eq:gradient_approximation}
\begin{aligned}    
    \nabla^{\mathrm{MVG}}_\vtheta J(\vtheta) = \frac{1}{1 - \gamma}\ints \inta & \dreal \nabla_\vtheta \log \pi (a | s)  \\ 
    & \times \qfake \d s \d a.
\end{aligned}
\end{equation}
\end{definition}
Thus, the MVG employs experience collected in the real environment $p$, \ie sampling from $\dreal$, and uses the generative power of the estimated transition kernel $\widehat{p}$ in the computation of an approximate state-action value function $Q^{\pi,\widehat{p}}$ only. 
In this way, it is possible to find a compromise between a full model-based estimator, in which the experience is directly generated from $\delta_\mu^{\pi,\widehat{p}}$~\cite{deisenroth2011pilco,deisenroth2013survey}, and a Monte Carlo estimator (\eg GPOMDP~\cite{baxter2001infinite}) in which also the Q-function is computed from experience collected in the real environment.
Therefore, the MVG limits the bias effect of $\widehat{p}$ to the Q-function approximation $Q^{\pi,\widehat{p}}$.\footnote{It is worth noting that when the environment dynamics can be approximated \emph{locally} with a simple model or some prior knowledge on the environment is available, selecting a suitable approximator $\widehat{p}$ for the transition model is easier than choosing an appropriate function approximator for a critic in an actor-critic architecture.} At the same time, it enjoys a smaller variance \wrt a Monte Carlo estimator, especially in an off-policy setting, as the Q-function is no longer estimated from samples but just approximated using $\widehat{p}$. Existing approaches can be interpreted as MVG. For instance, the ones based on model-based value expansion \cite{Feinberg2018ModelBasedVE,buckman2018sample}, that use a fixed-horizon unrolling of an estimated forward model for obtaining a better value function in an actor-critic setting.
%
%

A central question concerning Definition~\ref{thr:defMVG}, is how the choice of $\widehat{p}$ affects the quality of the gradient approximation, \ie how much bias an MVG introduces in the gradient approximation. To this end, we bound the approximation error by the expected KL-divergence between $p$ and $\widehat{p}$.

\begin{restatable}{theorem}{Weighting}
\label{th:weighting}
Let $q \in [1, +\infty]$ and $\widehat{p} \in \mathcal{P}$. Then, the $L^q$-norm of the difference between the policy gradient $\nabla_\vtheta J(\vtheta)$ and the corresponding MVG $\nabla^{\mathrm{MVG}}_\vtheta J(\vtheta)$ can be upper bounded as:
\begin{align*}
    &\left\| \nabla_\vtheta J(\vtheta) - \nabla^{\mathrm{MVG}}_\vtheta J(\vtheta) \right\|_q \leq \frac{\gamma \sqrt{2} Z R_{\max}}{(1-\gamma)^2}  \\
     &\phantom{\| \nabla_\vtheta J(\vtheta) - \nabla} \times \sqrt{\E_{s,a \sim \eta^{\pi,p}_{\mu}} \left[ D_{KL}(p(\cdot|s,a) \| \widehat{p} (\cdot|s,a)) \right]},
\end{align*}
where
\begin{equation*}
	\eta^{\pi,p}_{\mu}(s,a) =  \ints \inta \nu_\mu^{\pi,p}(s',a') \delta_{s',a'}^{\pi,p}(s,a) \d s' \d a'
\end{equation*} 
is a probability distribution over $\mathcal{S \times A}$, with $\nu_\mu^{\pi,p}(s',a') = \frac{1}{Z} \delta_{\mu}^{\pi, p}(s',a') \left\| \nabla_\vtheta \log \pi_\vtheta (a'|s') \right\|_q$ and $Z =  \ints \inta \delta_{\mu}^{\pi, p}(s',a')  \left\| \nabla_\vtheta \log \pi_\vtheta (a'|s')\right\|_q  \d s' \d a'$ both not depending on $\widehat{p}$.\footnote{We need to assume that $Z > 0$ in order for $\eta^{\pi,p}_{\mu}$ to be well-defined. This is not a limitation, as if $Z=0$, then $\nabla_\vtheta J(\vtheta) = \bm{0}$ and there is no need to define $\eta^{\pi,p}_{\mu}$ in this case.}
\end{restatable}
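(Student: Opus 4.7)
The strategy is to reduce the gradient error to a pointwise error on the action-value functions, unroll that error via a simulation-lemma identity so that the occupancy $\delta^{\pi,p}_{s,a}$ appears naturally, dominate the resulting transition-kernel difference by a KL term via Pinsker's inequality, and finally move a square root outside the expectation by Jensen.

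First, since the policy-gradient formula and the MVG share the same weight $\dreal \nabla_\vtheta \log \pi(a|s)$ and differ only in the Q-function used, their difference factors as
\begin{equation*}
\nabla_\vtheta J(\vtheta) - \nabla^{\mathrm{MVG}}_\vtheta J(\vtheta) = \frac{1}{1-\gamma} \ints \inta \dreal \nabla_\vtheta \log \pi(a|s) \bigl(\qreal - \qfake\bigr) \d s \, \d a .
\end{equation*}
Applying Jensen's inequality (convexity of $\|\cdot\|_q$) brings the $L^q$-norm inside the integral, reducing the problem to bounding $|\qreal - \qfake|$ with weight $\dreal \|\nabla_\vtheta \log \pi(a|s)\|_q$.

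Next I would derive a simulation-lemma identity for $\qreal - \qfake$. The right decomposition---the one that places the occupancy of $p$ outside and $V^{\pi,\widehat{p}}$ under the model difference---is obtained by adding and subtracting $\widehat{p}(s'|s,a) V^{\pi,\widehat{p}}(s')$ inside the Bellman equation, giving the recursion
\begin{equation*}
\qreal - \qfake = \gamma \ints (p-\widehat{p})(s'|s,a) V^{\pi,\widehat{p}}(s') \d s' + \gamma \ints \inta p(s'|s,a) \pi(a'|s') \bigl(Q^{\pi,p}-Q^{\pi,\widehat{p}}\bigr)(s',a') \d s' \, \d a' .
\end{equation*}
Unrolling and collapsing the resulting geometric series into the discounted occupancy $\ddreal$ yields the closed form
\begin{equation*}
\qreal - \qfake = \frac{\gamma}{1-\gamma} \ints \inta \ddreal \ints (p-\widehat{p})(s''|s',a') V^{\pi,\widehat{p}}(s'') \d s'' \, \d s' \, \d a' .
\end{equation*}
The innermost integral is a linear functional of $p-\widehat{p}$ applied to a function bounded by $R_{\max}/(1-\gamma)$, so H\"older together with Pinsker's inequality $\|p-\widehat{p}\|_{TV} \leq \sqrt{D_{KL}(p\|\widehat{p})/2}$ bounds it (up to the numerical constant) by $\frac{R_{\max}}{1-\gamma}\sqrt{D_{KL}(p(\cdot|s',a')\|\widehat{p}(\cdot|s',a'))}$.

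Finally I would substitute this Q-error bound back into the gradient expression and rewrite the outer $(s,a)$-integral as an expectation with respect to $\nu^{\pi,p}_\mu$, extracting $Z$ as a multiplicative prefactor. The result is a double expectation of $\sqrt{D_{KL}}$ under the joint distribution $\nu^{\pi,p}_\mu(s,a)\,\ddreal$ on $(s,a,s',a')$, whose $(s',a')$-marginal is exactly $\eta^{\pi,p}_\mu(s',a')$ by the definition given in the theorem. Concavity of $\sqrt{\cdot}$ combined with Jensen's inequality then moves the square root outside the expectation, producing a bound of the claimed form. The main obstacle is the simulation lemma in the second step: the alternative add-and-subtract decomposition would produce $\delta^{\pi,\widehat{p}}_{s,a}$ instead of $\ddreal$, so the marginal would no longer coincide with $\eta^{\pi,p}_\mu$; once the correct decomposition is fixed, the remaining steps are routine applications of the triangle inequality, Jensen, H\"older, and Pinsker, and only the numerical constants need careful tracking.
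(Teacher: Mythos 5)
Your proposal is correct and follows essentially the same route as the paper's own proof: you reduce the gradient error to the weighted error on the Q-function, express that error through the occupancy $\delta^{\pi,p}_{s,a}$ under the true dynamics, and conclude with Pinsker's inequality and Jensen, recovering exactly the weighting distribution $\eta^{\pi,p}_{\mu}$. The only difference is presentational: where the paper bounds $\left|Q^{\pi,p}(s,a)-Q^{\pi,\widehat{p}}(s,a)\right|$ by $\left\|\delta^{\pi,p}_{s,a}-\delta^{\pi,\widehat{p}}_{s,a}\right\|_1$ and then invokes a separate occupancy-difference lemma (its Lemma A.1, based on Metelli et al. 2018), you derive the equivalent estimate inline via the Bellman add-and-subtract and unrolling identity involving $V^{\pi,\widehat{p}}$, which is the same argument carried out explicitly.
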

\begin{proof}[Proof Sketch] Since ${Q^{\pi,p}(s,a) = \int \delta^{\pi,p}_{s,a}(s',a') r(s',a') \d s' \d a'}$, we bound the Q-function difference with $\left\|\delta^{\pi,p}_{s,a} - \delta^{\pi,\widehat{p}}_{s,a}\right\|_1$. The latter is upper bounded with $\left\| p(\cdot|s',a') - \widehat{p}(\cdot|s',a') \right\|_1$. The result follows from Pinsker's inequality.
\end{proof}

Similarly to what was noted for other forms of decision-aware MBRL \cite{farahmand2017value}, a looser bound in which the expectation on the KL-divergence is taken under $\delta_{\mu}^{\pi,p}$ can be derived (Appendix~\ref{sec:gradient_unaware}). This motivates the common maximum likelihood approach.
However, our bound is tighter and clearly shows that not all collected transitions have the same relevance when learning a model that is used in estimating the MVG. 
Overall, the most important $(s,a)$ pairs are those that are \textit{likely to be reached from the policy starting from high gradient-magnitude state-action pairs}. 

\section{Gradient-Aware Model-based Policy Search} \label{sec:gamps}
Inspired by Theorem~\ref{th:weighting}, we propose a policy search algorithm that employs an MVG approximation, combining trajectories generated in the real environment together with a model-based approximation of the Q-function obtained with the estimated transition model $\widehat{p}$. The algorithm, \textit{Gradient-Aware Model-based Policy Search} (GAMPS), consists of three steps: learning the model $\widehat{p}$ (Section~\ref{sec:LearningP}), computing the value function $Q^{\pi,\widehat{p}}$ (Section~\ref{sec:ComputingQ}) and updating the policy using the estimated gradient $\widehat{\nabla}_\vtheta J(\vtheta)$ (Section~\ref{sec:ComputingGrad}).

\subsection{Learning the Transition Model}\label{sec:LearningP}
To learn $\widehat{p}$, we aim at minimizing the bound in Theorem~\ref{th:weighting}, over a class of transition models $\mathcal{P}$, using the trajectories $\mathcal{D}$ collected with $\zeta_\mu^{\pi_b,p}$. However, to estimate an expected value computed over $\eta_{\mu}^{\pi,p}$, as in Theorem~\ref{th:weighting}, we face two problems. First, the policy mismatch between the behavioral policy $\pi_b$ used to collect $\mathcal{D}$ and the current agent's policy $\pi$. This can be easily addressed by using importance sampling. Second, given a policy $\pi$ we need to be able to compute the expectations over $\eta_{\mu}^{\pi,p}$ using samples from $\zeta_\mu^{\pi,p}$. In other words, we need to reformulate the expectation over $\eta_{\mu}^{\pi,p}$ in terms of expectation over trajectories. To this end, we provide the following general result.

\begin{restatable}{lemma}{lemmaObjTraj}\label{thr:lemmaObjTraj}
	Let $\pi$ and $\pi_b$ be two policies such that $\pi \ll \pi_b$ ($\pi$ is absolutely continuous \wrt to $\pi_b$). Let $f : \mathcal{S \times A} \rightarrow \mathbb{R}^k$ be an arbitrary function defined over the state-action space. Then, it holds that:
	\begin{align*}
		& \E_{s,a \sim \eta_\mu^{\pi,p}} \left[f(s,a)\right] = \frac{(1-\gamma)^2}{Z} \E_{\tau \sim \zeta^{\pi_b,p}_{\mu}} \Big[ \sum_{t=0}^{+\infty} \gamma^t  \rho_{\pi/\pi_b} (\tau_{0:t}) \\ 
		& \phantom{\E_{s,a \sim \eta_\mu^{\pi,p}} \left[f(s,a)\right] = Z} \times \sum_{l=0}^t \left\| \nabla_{\vtheta} \log \pi (a_l|s_l) \right\|_q  f(s_{t},a_{t}) \Big].
	\end{align*}
\end{restatable}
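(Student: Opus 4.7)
The plan is to unfold the definition of $\eta_\mu^{\pi,p}$ layer by layer, rewrite the nested state-action marginals as a discounted sum of expectations over a single trajectory under $\pi$, and finally change measure to $\pi_b$ via importance sampling.

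First, I would substitute the definitions of $\eta_\mu^{\pi,p}$ and $\nu_\mu^{\pi,p}$, and expand
\[
\delta_{s',a'}^{\pi,p}(s,a) = (1-\gamma)\sum_{m\geq 0}\gamma^m \Pr(s_m=s, a_m=a \mid s_0=s', a_0=a')
\]
together with $\delta_\mu^{\pi,p}(s',a') = (1-\gamma)\sum_{l\geq 0}\gamma^l \Pr(s_l=s', a_l=a')$. By the Markov property and a time shift, for any fixed $l,m$ the resulting inner integral satisfies
\[
\iint \Pr(s_l=s',a_l=a')\,\|\nabla_\vtheta\log\pi(a'|s')\|_q\,\E[f(s_m,a_m)\mid s_0=s',a_0=a']\,\d s'\,\d a' = \E_{\tau \sim \zeta_\mu^{\pi,p}}\!\left[\|\nabla_\vtheta\log\pi(a_l|s_l)\|_q\,f(s_{l+m},a_{l+m})\right],
\]
which is a direct application of the tower property conditioning on $(s_l,a_l)$. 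Reindexing the ensuing double sum by $t := l+m$, so that $l \in \{0,\ldots,t\}$ and $\gamma^{l+m}$ collapses to $\gamma^t$, gives
\[
\E_{s,a \sim \eta_\mu^{\pi,p}}[f(s,a)] = \frac{(1-\gamma)^2}{Z}\,\E_{\tau \sim \zeta_\mu^{\pi,p}}\!\left[\sum_{t=0}^{+\infty}\gamma^t \sum_{l=0}^{t}\|\nabla_\vtheta\log\pi(a_l|s_l)\|_q\,f(s_t,a_t)\right].
\]

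Finally, for each $t$ the summand depends only on the prefix $\tau_{0:t}$, so the identity $\E_{\zeta_\mu^{\pi,p}}[h(\tau_{0:t})] = \E_{\zeta_\mu^{\pi_b,p}}[\rho_{\pi/\pi_b}(\tau_{0:t})\,h(\tau_{0:t})]$, well-defined thanks to the assumption $\pi \ll \pi_b$, changes the measure term by term. Swapping the sum with the resulting $\pi_b$-expectation via Fubini (justified by the geometric discount and boundedness of $f$ if needed) produces the stated formula. The main obstacle is purely bookkeeping: one must carefully track the roles of the two indices $l$ (the time at which the score is evaluated) and $m$ (the forward offset at which $f$ is sampled), verify that $l+m$ corresponds to the sampling time of $f$ after the Markov time shift, and confirm that the exchange of the two nested discounted sums with the expectation is valid.
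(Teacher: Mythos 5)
Your proposal is correct and follows essentially the same route as the paper: the paper also unfolds $\eta_\mu^{\pi,p}$ via the discounted-sum definition of the two state-action distributions (packaged as an auxiliary lemma applied twice), uses the Markov/tower argument to land on the single-trajectory expression $\frac{(1-\gamma)^2}{Z}\E_{\tau\sim\zeta_\mu^{\pi,p}}\big[\sum_t \gamma^t \sum_{l=0}^t \|\nabla_\vtheta\log\pi(a_l|s_l)\|_q f(s_t,a_t)\big]$ after the same $t=l+m$ reindexing, and then importance-samples prefix-by-prefix to pass from $\zeta_\mu^{\pi,p}$ to $\zeta_\mu^{\pi_b,p}$. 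No substantive differences beyond your argument being presented in one pass rather than as two intermediate lemmas.
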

To specialize Lemma~\ref{thr:lemmaObjTraj} for our specific case, we just set $f(s,a) = D_{KL}(p(\cdot|s,a) \| \widehat{p} (\cdot|s,a))$. Note that $Z$ is independent from $\widehat{p}$ and thus it can be ignored in the minimization procedure. Furthermore, minimizing the KL-divergence is equivalent to maximizing the log-likelihood of the observed transitions. Putting everything together, we obtain the objective:
\begin{equation}\label{eq:objectiveP}
\begin{aligned}
	\widehat{p} &\in \argmax_{\overline{p} \in \mathcal{P}} \frac{1}{N} \sum_{i=1}^N \sum_{t=0}^{T_i - 1} \omega_t^i \log \overline{p} \left(s_{t+1}^i|s_t^i, a_t^i \right), \\ \omega_t^i &= \gamma^t  \rho_{\pi/\pi_b} (\tau_{0:t}^i) \sum_{l=0}^t \left\| \nabla_{\vtheta} \log \pi (a_l^i|s_l^i)  \right\|_q.
\end{aligned}
\end{equation}

The factors contained in the weight $\omega_t^i$ accomplish three goals in weighting the transitions. 
The discount factor $\gamma^t$ encodes that later transitions are exponentially less important in the gradient computation. The importance weight $\rho_{\pi/\pi_b}(\tau^{i}_{0:t})$ is larger for the transitions that are more likely to be generated by the current policy $\pi$. 
This incorporates a key consideration into model learning: since the running policy $\pi$ can be quite different from the policy that generated the data $\pi_b$, typically very explorative~\cite{deisenroth2013survey}, an accurate approximation of the dynamics for the regions that are rarely reached by the current policy is not useful. 
Lastly, the factor $\sum_{l=0}^t \left\| \nabla_{\vtheta} \log \pi (a_l^i|s_l^i)  \right\|_q$ favors those transitions that occur at the end of a subtrajectory $\tau_{0:t}$ with a high cumulative score-magnitude. This score accumulation resembles the expression of some model-free gradient estimators~\cite{baxter2001infinite}. 
Intuitively, the magnitude of the score of a policy is related to its \emph{opportunity to be improved}, \ie the possibility to change the probability of actions. Our gradient-aware weighting scheme encourages a better approximation of the dynamics for states and actions found in trajectories that can potentially lead to the most significant improvements to the policy.

\subsection{Computing the value function}\label{sec:ComputingQ}
The estimated transition model $\widehat{p}$ can be used to compute the action-value function $Q^{\pi,\widehat{p}}$ for any policy $\pi$. This amounts to \textit{evaluating} the current policy using $\widehat{p}$ instead of the actual transition probability kernel $p$. In the case of finite MDPs, the evaluation can be performed either in closed form or in an iterative manner via dynamic programming~\cite{bellman1954theory,sutton2018reinforcement}. For continuous MDPs, $Q^{\pi,\widehat{p}}$ cannot, in general, be represented exactly. A first approach consists of employing a function approximator $\widehat{Q}\in \mathcal{Q}$ and applying approximate dynamic programming~\cite{bertsekas1995dynamic}. However, this method requires a proper choice of a functional space $\mathcal{Q}$ and the definition of the regression targets, which should be derived using the estimated model $\widehat{p}$~\cite{ernst2005tree,riedmiller2005neural}, possibly introducing further bias.
%

We instead encourage the use of $\widehat{p}$ as a generative model for the sole purpose of approximating $Q^{\pi,\widehat{p}}$. Recalling that we will use $\widehat{Q}$ to estimate the policy gradient from the available trajectories, we can just obtain a Monte Carlo approximation of $Q^{\pi,\widehat{p}}$ on the fly, in an unbiased way, averaging the return from a (possibly large) number $M$ of imaginary trajectories obtained from the estimated model $\widehat{p}$:
\begin{equation}
	\widehat{Q}(s,a) = \frac{1}{M} \sum_{j=1}^M \sum_{t=0}^{T_j-1} \gamma^t r(s_{t}^j, a_{t}^j), \quad \tau^j \sim \zeta_{s,a}^{\pi,\widehat{p}}.
\end{equation}
This approach has the advantage of avoiding the harsh choice of an appropriate model class $\mathcal{Q}$ and the definition of the regression targets, while providing an unbiased estimate for the quantity of interest.

\subsection{Estimating the policy gradient}
\label{sec:ComputingGrad}
After computing $Q^{\pi,\widehat{p}}$ (or some approximation $\widehat{Q}$), all the gathered information can be used to improve policy $\pi$. 
As we are using a \textit{model-value-based gradient}, the trajectories we will use have been previously collected in the real environment.
Furthermore, the data have been generated by a possibly different policy $\pi_b$, and, to account for the difference in the distributions, we need importance sampling again. Therefore, by writing the sample version of Equation \eqref{eq:gradient_approximation} we obtain:
\begin{equation}
\label{eq:gradient_estimate}
\begin{aligned}
    \widehat{\nabla}_\vtheta J (\vtheta) = \frac{1}{N} \sum_{i=1}^{N} \sum_{t=0}^{T_i - 1} & \gamma^t \rho_{\pi/\pi_b}(\tau^{i}_{0:t}) \nabla_\vtheta \log \pi (a^{i}_t| s^{i}_t)  \\
    & \times Q^{\pi,\widehat{p}}(s^{i}_t,a^{i}_t).
\end{aligned}
\end{equation}

For performing batch policy optimization, we repeat the three steps presented in this section using the data collected by the behavior policy $\pi_b$. At each iteration, we fit the model with the weights of the current policy, we employ it in the computation of the state-action value function and we then improve the policy with one or more steps of gradient ascent. The overall procedure is summarized in Algorithm \ref{alg:GAMPS}.

\begin{algorithm}[t]
\small
\caption{Gradient-Aware Model-based Policy Search}
\label{alg:GAMPS}
\textbf{Input:} Trajectory dataset $\mathcal{D}$, behavior policy $\pi_b$, initial parameters $\vtheta_0$, step size schedule $(\alpha_k)_{k=0}^{K-1}$
\begin{algorithmic}[1]
\For{$k = 0,1,...,K-1$}
\LineComment{Learn $\widehat{p}$ (Section~\ref{sec:LearningP})}
\State $\omega_{t,k}^{i} \gets \gamma^t \rho_{\pi_{\vtheta_k}/\pi_b}(\tau^{i}_{0:t}) \sum_{l=0}^t  \| \nabla_\vtheta \log \pi_{\vtheta_k} (a^i_l|s^i_l)\|_q$ 
\State $\widehat{p}_{k} \gets \argmax_{\overline{p} \in \mathcal{P}} \frac{1}{N} \sum_{i=1}^{N}  \sum_t \omega^{i}_{t,k} \log \overline{p}(s^{i}_{t+1}|s^{i}_t,a^{i}_t)$ 
\LineComment{Compute $\widehat{Q}$ (Section~\ref{sec:ComputingQ})}
\State Generate $M$ trajectories for each $(s,a)$ using $\widehat{p}_k$
\State $\widehat{Q}_k(s,a) = \frac{1}{M} \sum_{j=1}^M \sum_{t=0}^{T_j-1} \gamma^t r(s_{t}^j, a_{t}^j)$ 
\LineComment{Improve Policy (Section~\ref{sec:ComputingGrad})}
\State \begin{varwidth}[t]{\linewidth} $\widehat{\nabla}_\vtheta J (\vtheta_k) \gets \frac{1}{N} \sum_{i=1}^{N} \sum_{t=0}^{T_i - 1}  \gamma^t \rho_{\pi_{\vtheta_k}/\pi_b}(\tau^{i}_{0:t}) \times$ 
\par \hskip \algorithmicindent \phantom{$ \widehat{\nabla}_\vtheta J (\vtheta_k) \gets \frac{1}{N} \sum_{i}^{N}$} $ \times \nabla_\vtheta \log \pi_{\vtheta_k} (a^{i}_t| s^{i}_t) \widehat{Q}_k(s^{i}_t,a^{i}_t)$ \end{varwidth} 
\State $\vtheta_{k+1} \gets \vtheta_k + \alpha_k \widehat{\nabla}_\vtheta J (\vtheta_k)$
\EndFor
\end{algorithmic}
\end{algorithm}

\section{Theoretical Analysis}\label{sec:theoreticalAnalysis}
In this section, we provide a finite-sample bound for the gradient estimation of Equation~\eqref{eq:gradient_estimate}, assuming to have the exact value of $Q^{\pi,\widehat{p}}$. This corresponds to the analysis of a single iteration of GAMPS.
We first define the following functions. Let $\tau$ be a trajectory, $\pi \in \Pi_\Theta$ and $\overline{p} \in \mathcal{P}$. We define $l^{\pi,\overline{p}}(\tau) = \sum_{t=0}^{+\infty} \omega_t \log \overline{p} \left(s_{t+1}|s_t, a_t \right)$ and $\mathbr{g}^{\pi,\overline{p}}(\tau) = \sum_{t=0}^{+\infty} \gamma^t \rho_{\pi/\pi_b}(\tau_{0:t})  \nabla_\vtheta \log \pi (a_t| s_t) Q^{\pi,\overline{p}}(s_t,a_t)$. To obtain our result, we need the following assumptions.
\begin{assumption}\label{ass:boundedMoment}
	The second moment of $l^{\pi,\overline{p}}$ and $\mathbr{g}^{\pi,\overline{p}}$ are uniformly bounded over $\mathcal{P}$ and $\Pi_\Theta$. In this case, given a dataset $\mathcal{D} = \{\tau^i\}_{i=1}^N$, there exist two constants ${c_1,\,c_2 < +\infty}$ such that for all $\overline{p} \in \mathcal{P}$ and $\pi \in \Pi_\Theta$:
%
	\begin{align*}
	 & \max\bigg\{ \E_{\tau \sim \zeta_{\mu}^{\pi_b,p}} \left[l^{\pi,\overline{p}}(\tau)^2 \right], \frac{1}{N} \sum_{i=1}^N l^{\pi,\overline{p}}(\tau^i)^2  \bigg\} \le c_1^2 \\
 & \max\bigg\{ \bigg\| \E_{\tau \sim \zeta_{\mu}^{\pi_b,p}} \left[\mathbr{g}^{\pi,\overline{p}}(\tau)^2  \right] \bigg\|_{\infty},\\
 & \phantom{\max\bigg\{} \bigg\| \frac{1}{N} \sum_{i=1}^N \mathbr{g}^{\pi,\overline{p}}(\tau^i)^2 \bigg\|_{\infty} \bigg\}  \le R_{\max}^2 c_2^2.
	\end{align*}
\end{assumption}
\begin{assumption}\label{ass:boundedPdim}
	The pseudo-dimension of the hypothesis spaces $\ \left\{ l^{\pi,\overline{p}} : \overline{p} \in \mathcal{P},\, \pi \in \Pi \right\}$ and $ \left\{ \mathbr{g}^{\pi,\overline{p}} : \overline{p} \in \mathcal{P},\, \pi \in \Pi \right\}$ are bounded by $v < +\infty$.
\end{assumption}
Assumption~\ref{ass:boundedMoment} is requiring that the overall effect of the importance weight $\rho_{\pi/\pi_b}$, the score $\nabla_{\vtheta} \log \pi$ and the approximating transition model $\overline{p}$ preserves the finiteness of the second moment. Clearly, a sufficient (albeit often unrealistic) condition is requiring all these quantities to be uniformly bounded. Assumption~\ref{ass:boundedPdim} is necessary to state learning theory guarantees. We are now ready to present the main result, which employs the learning theory tools of~\citet{cortes2013relative}.

\begin{restatable}{theorem}{mainTheorem}
	Let $q \in [1,+\infty]$, $d$ be the dimensionality of $\Theta$ and $\widehat{p} \in \mathcal{P}$ be the maximizer of the objective function in Equation~\eqref{eq:objectiveP}, obtained with $N>0$ independent trajectories $\{\tau^i\}_{i=1}^N$. Under Assumption~\ref{ass:boundedMoment} and~\ref{ass:boundedPdim}, for any $\delta \in (0,1)$, with probability at least $1-4\delta$ it holds that: 
\begin{align*}
		&\left\| {\widehat{\nabla}}_{\vtheta} J(\vtheta) - \nabla_{\vtheta} J(\vtheta) \right\|_q \le \underbracket{2 R_{\max} \left( d^{\frac{1}{q}} c_2 \epsilon + \frac{\gamma \sqrt{2 Z c_1 \epsilon}}{1 - \gamma} \right)}_{\text{estimation error}}\\
		& +  \underbracket{\frac{\gamma \sqrt{2} Z R_{\max}}{(1-\gamma)^2} \inf_{\overline{p} \in \mathcal{P}}  \sqrt{\E_{s,a \sim \eta^{\pi,p}_{\mu}} \left[ D_{KL}(p(\cdot|s,a) \| \overline{p} (\cdot|s,a)) \right]}}_{\text{approximation error}} ,
\end{align*}
	where $\epsilon = \sqrt{\frac{v \log \frac{2eN}{v} + \log \frac{8(d+1)}{\delta}}{N}} \Gamma \left(\sqrt{\frac{v \log \frac{2eN}{v} + \log \frac{8(d+1)}{\delta}}{N}} \right)$ and $\Gamma(\xi) \coloneqq \frac{1}{2} + \sqrt{ 1 + \frac{1}{2} \log \frac{1}{\xi} } $.
\end{restatable}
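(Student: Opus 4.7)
The plan is to decompose the total error via the triangle inequality, passing through the MVG computed with the learned model $\widehat{p}$:
\begin{align*}
    \left\| \widehat{\nabla}_\vtheta J(\vtheta) - \nabla_\vtheta J(\vtheta) \right\|_q
    \le \underbrace{\left\| \widehat{\nabla}_\vtheta J(\vtheta) - \nabla^{\mathrm{MVG}}_\vtheta J(\vtheta) \right\|_q}_{\text{(A): gradient sampling error}}  + \underbrace{\left\| \nabla^{\mathrm{MVG}}_\vtheta J(\vtheta) - \nabla_\vtheta J(\vtheta) \right\|_q}_{\text{(B): model-induced bias}}.
\end{align*}
Term (A) is the Monte Carlo error of the gradient estimator given a data-dependent $\widehat{p}$; term (B) is the bias of replacing $p$ with $\widehat{p}$ inside the Q-function, which will ultimately yield the approximation error.

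First I would bound (A). Writing $\widehat{\nabla}_\vtheta J(\vtheta) = \frac{1}{N}\sum_i \mathbr{g}^{\pi,\widehat{p}}(\tau^i)$ and $\nabla^{\mathrm{MVG}}_\vtheta J(\vtheta) = \E_{\tau\sim\zeta_\mu^{\pi_b,p}}[\mathbr{g}^{\pi,\widehat{p}}(\tau)]$, and noting that $\widehat{p}$ depends on the sample, I would invoke the uniform relative-deviation inequality of \citet{cortes2013relative} on each coordinate class $\{(\mathbr{g}^{\pi,\overline{p}})_j : \overline{p}\in\mathcal{P},\,\pi\in\Pi_\Theta\}$ for $j=1,\ldots,d$. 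By Assumption~\ref{ass:boundedMoment} each such class has second moment bounded by $R_{\max}^2 c_2^2$ and by Assumption~\ref{ass:boundedPdim} its pseudo-dimension is at most $v$, producing a per-coordinate deviation of order $R_{\max} c_2 \epsilon$ with probability at least $1-\delta/(d+1)$ each. A union bound over the $d$ coordinates followed by aggregation into the $L^q$ norm of $\mathbb{R}^d$ contributes the $2R_{\max} d^{1/q} c_2 \epsilon$ summand of the estimation error.

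For (B), Theorem~\ref{th:weighting} immediately gives
\begin{align*}
    \text{(B)} \le \frac{\gamma \sqrt{2}\, Z\, R_{\max}}{(1-\gamma)^2}\sqrt{\E_{s,a\sim\eta^{\pi,p}_\mu}\!\left[D_{KL}(p(\cdot|s,a)\|\widehat{p}(\cdot|s,a))\right]},
\end{align*}
so it remains to relate the expected weighted KL at the data-dependent $\widehat{p}$ to its infimum over $\mathcal{P}$. The key observation is that, by Lemma~\ref{thr:lemmaObjTraj} specialized to $f(s,a) = D_{KL}(p(\cdot|s,a)\|\overline{p}(\cdot|s,a))$, the population counterpart of the empirical objective in Eq.~\eqref{eq:objectiveP} equals $-\frac{Z}{(1-\gamma)^2}\E_{\eta^{\pi,p}_\mu}[D_{KL}(p\|\overline{p})]$ up to an $\overline{p}$-independent term. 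Applying \citet{cortes2013relative} a second time to the class $\{l^{\pi,\overline{p}}\}$ on both sides of the empirical--population gap, and combining with the empirical optimality of $\widehat{p}$, I obtain $\E_{\eta^{\pi,p}_\mu}[D_{KL}(p\|\widehat{p})] \le \inf_{\overline{p}\in\mathcal{P}}\E_{\eta^{\pi,p}_\mu}[D_{KL}(p\|\overline{p})] + O((1-\gamma)^2 c_1 \epsilon / Z)$. Substituting back and using $\sqrt{a+b}\le\sqrt{a}+\sqrt{b}$ separates cleanly into the stated approximation error plus the residual $2R_{\max}\gamma\sqrt{2Zc_1\epsilon}/(1-\gamma)$ estimation contribution; a union bound over the four invocations of~\citet{cortes2013relative} delivers the overall probability $1-4\delta$.

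The hard part will be calibrating the two relative-deviation applications so that the scale $Z/(1-\gamma)^2$ from Lemma~\ref{thr:lemmaObjTraj} cancels exactly inside the square root, producing the stated form of the estimation error, and so that numerical constants absorb into the displayed $2R_{\max}$ prefactor. The coordinate-wise use of the Cortes bound on the gradient class is also delicate: it must keep the same $\epsilon$ and pseudo-dimension $v$ across all $d$ coordinates, with the $d^{1/q}$ factor emerging purely from the final $L^q$ aggregation rather than from the underlying sample-complexity rate.
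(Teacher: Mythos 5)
Your proposal follows essentially the same route as the paper's proof: the triangle inequality through $\nabla^{\mathrm{MVG}}_\vtheta J(\vtheta)$, a coordinate-wise application of the relative-deviation bounds of \citet{cortes2013relative} to the gradient class with an $L^q$ aggregation over the $d$ coordinates, and an empirical-risk-minimization argument on the class $\{l^{\pi,\overline{p}}\}$ combined with Theorem~\ref{th:weighting}, Lemma~\ref{thr:lemmaObjTraj} and the subadditivity of the square root to turn the KL term at $\widehat{p}$ into the infimum plus the $\sqrt{Zc_1\epsilon}$ residual. Your bookkeeping of the confidence levels (union bound yielding the $1-4\delta$ statement after rescaling, with the $d$ coordinates absorbed into the $\log\frac{8(d+1)}{\delta}$ inside $\epsilon$) matches the paper's, which combines a $1-2d\delta$ event for the gradient class with a $1-2\delta$ event for the likelihood class and rescales $\delta$.
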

The theorem justifies the intuition behind the gradient estimation based on MVG.
A model $\overline{p}$ is good when it achieves a reasonable trade-off between the errors in approximation and estimation.\footnote{It is worth noting that the estimation error is $\widetilde{\mathcal{O}}(N^{-\frac{1}{4}})$.} In the case of scarce data (\ie small $N$), it is convenient to choose a low-capacity model class $\mathcal{P}$ in order to reduce the error-enlarging effect of the pseudo-dimension $v$. 
However, this carries the risk of being unable to approximate the original model. 
Nonetheless, the approximation error depends on an expected value under $\eta_\mu^{\pi,p}$. Even a model class that would be highly misspecified \wrt an expectation computed under the state-action distribution $\delta_\mu^{\pi,p}$ can, perhaps surprisingly, lead to an accurate gradient estimation using our approach.

\section{Related Works} \label{sec:related_works}

We now revise prior work in MBRL, focusing on policy search methods and those that include some level of awareness of the underlying control problem into model learning. 

{\bf Policy Search with MBRL~~} The standard approach consists in using a maximum likelihood estimation of the environment dynamics to perform simulations (or \textit{imaginary rollouts}) through which a policy can be improved without further or with limited interactions with the environment~\cite{deisenroth2013survey}.
This approach has taken different forms, with the use of tabular models~\cite{wang2003model}, least-squares density estimation techniques~\cite{tangkaratt2014model} or, more recently, combinations of variational generative models and recurrent neural networks employed in world models based on mixture density networks~\cite{ha_recurrent_2018}. Several methods incorporate the model uncertainty into policy updates, by using Gaussian processes and moment matching approximations~\cite{deisenroth2011pilco}, Bayesian neural networks~\cite{McAllister2016ImprovingPW} or ensembles of forward models~\cite{Chua2018DeepRL,kurutach2018modelensemble,janner2019trust,buckman2018sample}. MBRL works that are particularly related to GAMPS are those employing estimated forward models that are accurate only locally \cite{abbeel2006using,nguyen2009model,levine2014learning}, or using a \textit{model-value based gradient} formulation~\cite{abbeel2006using,Feinberg2018ModelBasedVE,buckman2018sample,heess2015learning} as described in Section~\ref{sec:mvg}.


{\bf Decision-aware MBRL~~} The observation that, under misspecified model classes, the dynamics of the environment must be captured foreseeing the final task to be performed led to the development of decision-aware approaches for model learning~\cite{farahmand2017value}.
While one of the first examples was a financial application~\cite{Bengio1997UsingAF}, the idea was introduced into MBRL~\cite{Joseph2013ReinforcementLW,Bansal2017GoaldrivenDL} and the related adaptive optimal control literature~\cite{Piroddi2003simulation} by using actual evaluations of a control policy in the environment as a performance index for model learning. 
More similarly to our approach, but in the context of value-based methods, a theoretical framework called \textit{value-aware} MBRL~\cite{farahmand2017value} was proposed, in which the model is estimated by minimizing the expected error on the Bellman operator, explicitly considering its actual use in the control algorithm. 
Starting from this, further theoretical considerations and approaches have been proposed~\cite{farahmand2018iterative,asadi2018equivalence}. 
Awareness of the final task to be performed has been also incorporated into stochastic dynamic programming~\cite{Donti2017TaskbasedEM,amos2018differentiable} and, albeit implicitly, into neural network-based works~\cite{Oh2017ValuePN,Silver2017ThePE,luo2018algorithmic}, in which value functions and models consistent with each other are learned.

\section{Experiments}\label{sec:experiments}
We now present an experimental evaluation of GAMPS, whose objective is two-fold: assessing the effect of our weighting scheme for model learning and comparing the performance in batch policy optimization of our algorithm against model-based and model-free policy search baselines.

\begin{figure*}[!h]
	\begin{subfigure}{.15\textwidth}
  		\centering
  		\includegraphics[width=\linewidth]{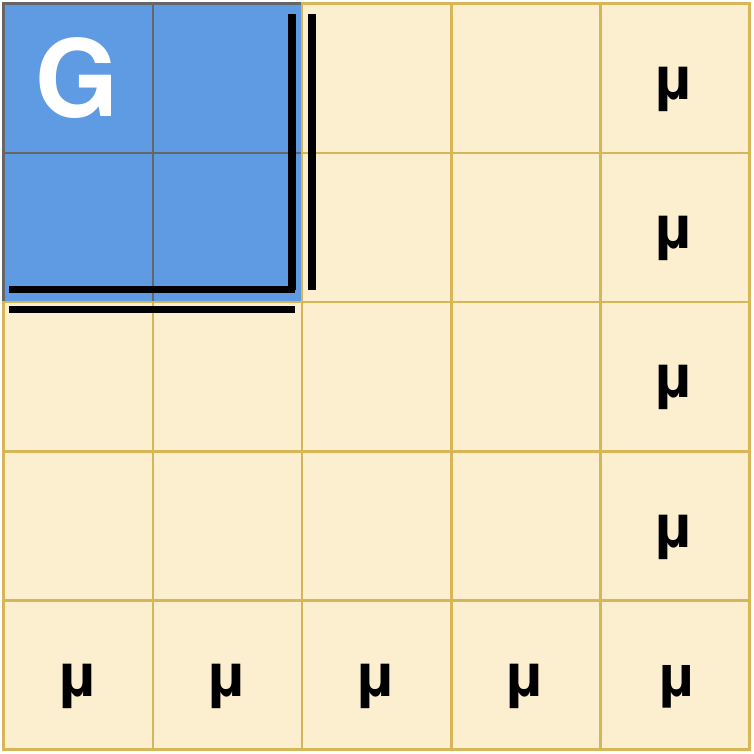}		
  		\caption{Gridworld}
  		\label{subfig:gridworld}
	\end{subfigure}
	\hspace{5pt}
	\begin{subfigure}{.39\textwidth}
  		\centering
  		\includegraphics[width=\linewidth]{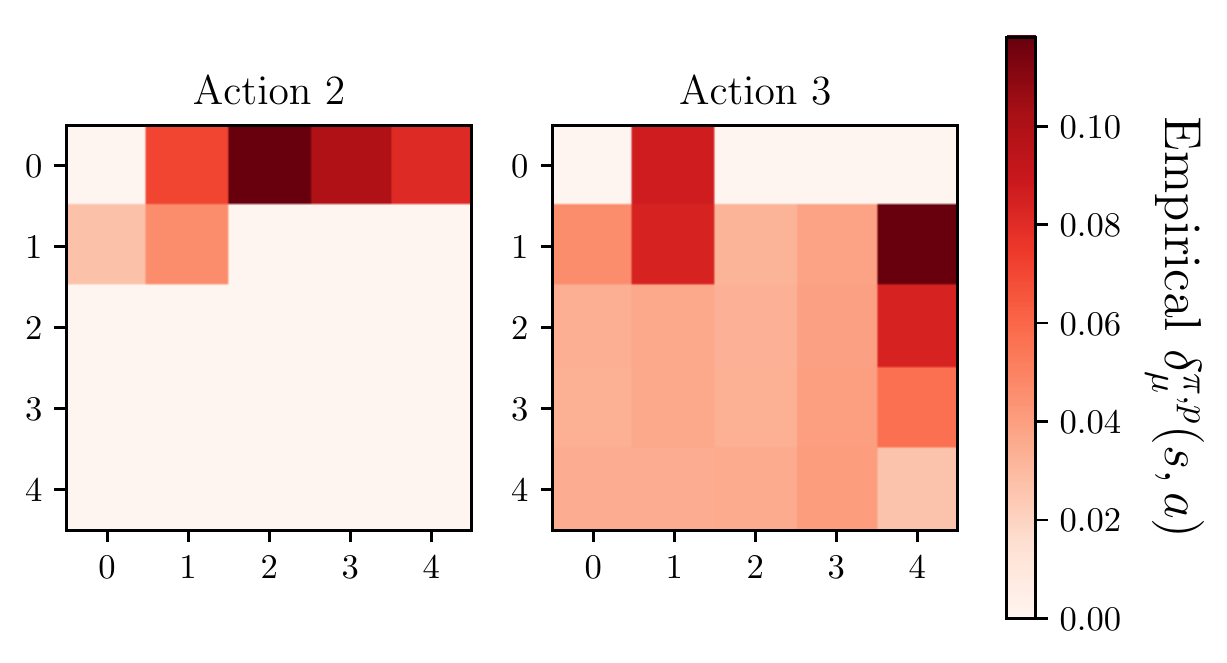} 	
  		\caption{State-action distribution weights}
  		\label{subfig:ml_weights}
	\end{subfigure}
	\begin{subfigure}{.39\textwidth}
  		\centering
  		\includegraphics[width=\linewidth]{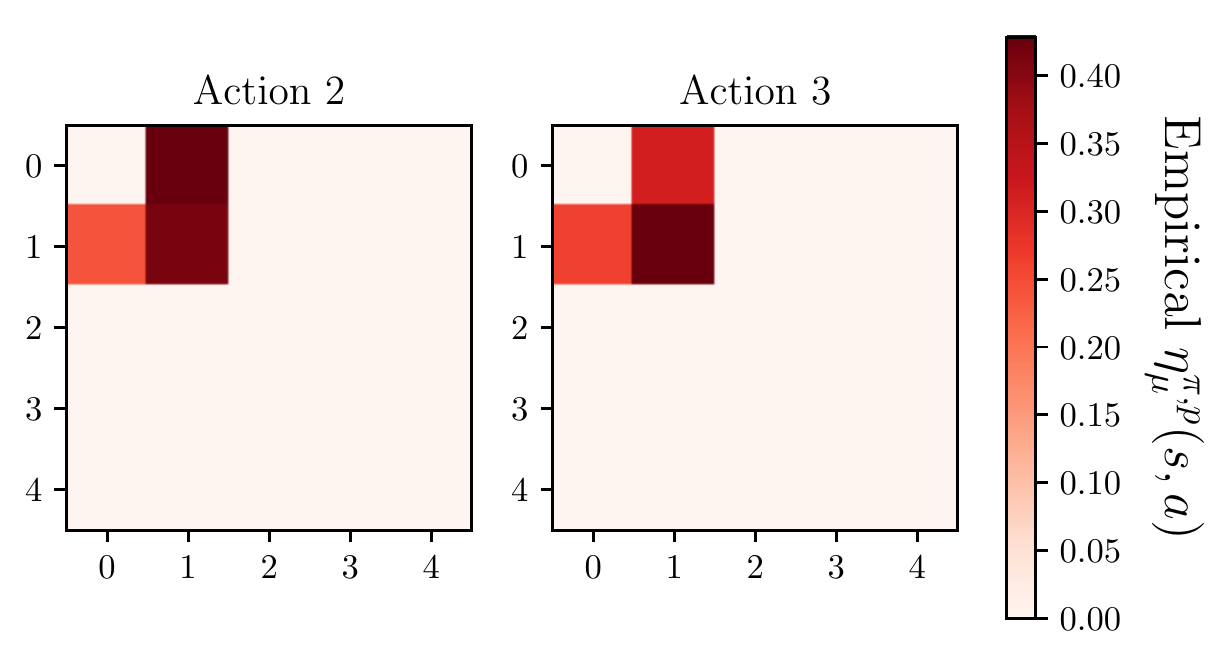}  		
  		\caption{Gradient-aware weights}
  		\label{subfig:ga_weights}
	\end{subfigure}
\caption{(\subref{subfig:gridworld}): Gridworld representation. The goal state is \textbf{G} and the possible initial states are $\mu$. The two areas with different dynamics are represented with different colors. (\subref{subfig:ml_weights}) and (\subref{subfig:ga_weights}): Normalized values of the empirical state-action distribution and the weighting factor for GAMPS. Each grid represents every state of the environment for the two most representative actions.}
\label{fig:weights}
\end{figure*}

\subsection{Two-areas Gridworld}
This experiment is meant to show how decision-awareness can be an effective tool to improve the accuracy of policy gradient estimates when using a forward model. The environment is a $5 \times 5$ gridworld, divided into two areas (lower and upper) with different dynamics: the effect of a movement action of the agent is reversed in one area \wrt the other. Once the agent gets to the lower area, it is not possible to go back in the upper one (Figure~\ref{subfig:gridworld}). We collect experience with a linear policy $\pi_b$ that is deterministic on the lower area and randomly initialized in the upper area, which is also used as initial policy for learning.

The first goal of this experiment is to show that, with the use of gradient-awareness, even an extremely simple model class can be sufficiently expressive to provide an accurate estimate of the policy gradient. Hence, we use a forward model which, given the sole action executed by the agent (\ie without knowledge of the current state), predicts the effect that it will cause on the position of the agent (\ie up, down, left, right, stay). 
This model class cannot perfectly represent the whole environment dynamics at the same time, as it changes between the two areas. However, given the nature of policy $\pi$, this is not necessary, since only the modeling of the upper area, which is indeed representable with our model, would be enough to perfectly improve the policy. Nonetheless, this useful information has no way of being captured using the usual maximum likelihood procedure, which, during model learning, weighs the transitions just upon visitation, regardless of the policy. 
To experimentally assess how our approach addresses this intuitive point, we generate 1000 trajectories running $\pi_b$ in the environment, and we first compare the maximum likelihood and the gradient-aware weighting factors, $\dreal$ and $\eta_{\mu}^{\pi,p} (s,a)$. 
The results (Figure~\ref{subfig:ml_weights} and \ref{subfig:ga_weights}) show that our method is able, in a totally automatic way, to avoid assigning importance to the transitions in which the policy cannot be improved. 

We further investigate the performance of GAMPS compared to batch learning with the maximum likelihood transition model (ML) and two classical model-free learning algorithms REINFORCE~\cite{williams1992simple} and PGT~\cite{sutton2000policy}. To adapt the latter two to the batch setting, we employ importance sampling in the same way as described in Equation \eqref{eq:gradient_estimate}, but estimating the Q-function using the same trajectories (and importance sampling as well). The results obtained by collecting different numbers of trajectories and evaluating on the environment are shown in Figure \ref{fig:gridworld_results}. When the data are too scarce, all compared algorithms struggle in converging towards good policies, experiencing high variance.\footnote{In batch learning, performance degradation when the current policy $\pi$ becomes too dissimilar from the behavioral policy $\pi_b$ is natural due to the variance of the importance weights. To avoid this effect, a stopping condition connected to the effective sample size~\cite{mcbook} can be employed.} It is worth noting that, with any amount of data we tested, the GAMPS learning curve is consistently above the others, showing superior performance even when considering the best iteration of all algorithms.


\begin{figure*}[t]
\includegraphics[scale=1]{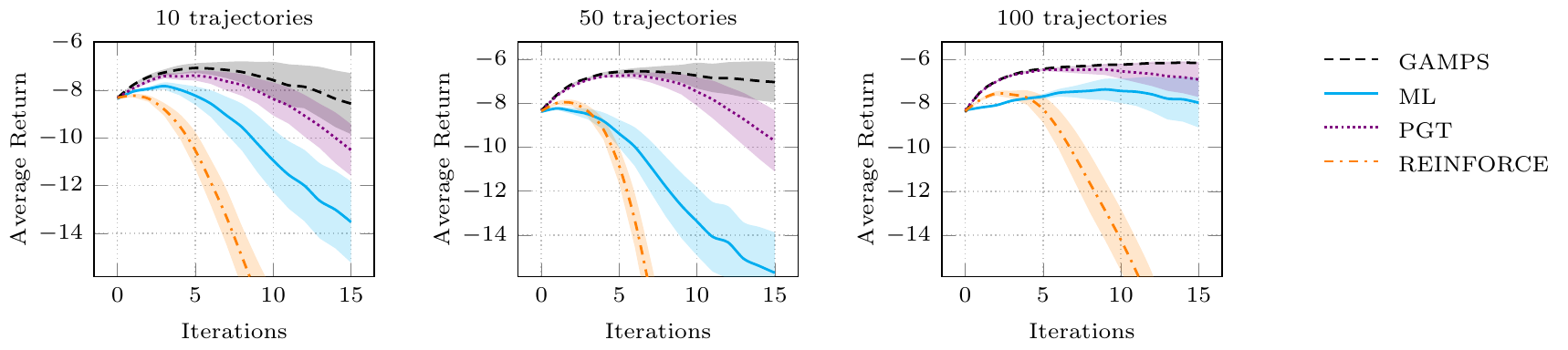}
\caption{Average return on the gridworld. ML employs maximum-likelihood model estimation (20 runs, mean $\pm$ std).}
\label{fig:gridworld_results}
\end{figure*}


\begin{figure}[t]
\centering
	\includegraphics[scale=1]{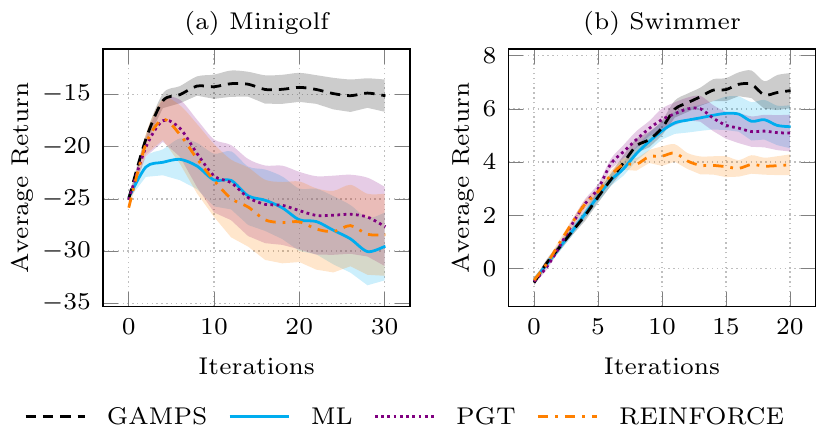}
\caption{Average return in the minigolf domain using 50 trajectories and in the swimmer environment using 100 trajectories (10 runs, mean $\pm$ std).}
\label{fig:minigolf_results}
\end{figure}


\subsection{Continuous Control}
To show that our algorithm is able to perform well also on continuous domains, we test its performance on a simulated Minigolf environment~\cite{lazaric2008reinforcement,tirinzoni2019transfer} and the 3-link Swimmer robot control benchmark based on Mujoco~\shortcite{todorov2012mujoco}.

In the minigolf game, the agent hits a ball using a flat-faced golf club (the putter) with the goal of reaching a hole in the minimum number of strokes. 
Given only the distance to the hole, the agent chooses, at each step, the angular velocity of the putter which determines the next position of the ball. 
The episode terminates when the ball enters the hole, with reward $0$, or when the agent overshoots, with reward $-100$. In all other cases, the reward is $-1$ and the agent can try other hits. We further suppose that the minigolf course is divided into two areas, one twice larger than the other, with different terrains: the first, nearest to the hole and biggest, has the friction of a standard track; the second has very high friction, comparable to the one of an area with sand. 
We use Gaussian policies that are linear on six radial basis function features. The model predicts the difference from the previous state by sampling from a Gaussian distribution with linearly parameterized mean and standard deviation. 

For the Mujoco swimmer the goal of the agent is to swim forward in a fluid. The policy is linear in the state features and the forward model is a 2-layer neural networks with 32 hidden neurons and tanh activation.

We evaluate GAMPS against the same baselines employed for the previous experiment. We collect a dataset of 50 and 100 trajectories for minigolf and swimmer respectively, using an explorative policy, and then run the algorithms for 30 and 20 iterations.
Results in terms of average return are reported in Figure~\ref{fig:minigolf_results}, showing that GAMPS outperforms all the other algorithms both in terms of terminal and maximum performance. 
In the minigolf domain, it is less susceptible to overfitting compared to the baselines, and it obtains a final policy able to reach the hole most of the time.
In the swimmer environment, where powerful models are used, GAMPS still shows superior performance, validating the insight provided by Theorem~\ref{th:weighting} even in this setting.

\section{Discussion and Conclusions}
In this paper, we presented GAMPS, a batch gradient-aware model-based policy search algorithm. GAMPS leverages the knowledge about the policy that is being optimized for learning the transition model, by giving more importance to the aspects of the dynamics that are more relevant for improving its performance. 
We derived GAMPS from the minimization of the bias of the model-value-based gradient, an approximation for the policy gradient that mixes trajectories collected in the real environment together with a value function computed with the estimated model. 
Our theoretical analysis validates the intuition that, when dealing with low capacity models, it is convenient to focus their representation capabilities on the portions of the environment that are most crucial for improving the policy. 
The empirical validation demonstrates that, even when extremely simple model classes are considered, GAMPS is able to outperform the baselines. 

The main limitations of GAMPS are in the need to reuse all the samples in model learning at each iteration and in the compounding errors during rollouts.
Future work could focus on mitigating these issues, as well as on adapting GAMPS to the interactive scenario, by leveraging existing on/off-policy approaches~\cite{metelli2018policy}, and on different ways to exploit gradient-aware model learning.

\section*{Acknowledgments}
This work has been partially supported by the Italian MIUR PRIN 2017 Project ALGADIMAR ``Algorithms, Games, and Digital Market''.
 
{
\fontsize{9pt}{10pt} \selectfont
\bibliography{model_based}
\bibliographystyle{aaai}
}

\onecolumn
\appendix
\section{Proofs and Derivations}\label{apx:proofs}
In this appendix, we report the proofs of the results presented in the main paper, together with some additional results and extended discussion. 

\subsection{Proofs of Section~\ref{sec:mvg}}
The following lemma is used in proving Theorem \ref{th:weighting}.
\begin{lemma}
    \label{daction_to_dstate}
    Considering the state-action distributions $\delta_{\mu}^{\pi, p}$ and $\delta_{\mu}^{\pi, \widehat{p}}$ under policy $\pi$ and models $p$ and $\widehat{p}$, the following upper bound holds:
    \begin{equation}
        \nonumber
        \left\| \delta_{\mu}^{\pi, p} - \delta_{\mu}^{\pi,\widehat{p}} \right\|_1 \leq
     \frac{\gamma}{1 - \gamma} \E_{s,a \sim \delta_{\mu}^{\pi,p}}\left[ \left\| p(\cdot|s,a) - \widehat{p}(\cdot|s,a) \right\|_1 \right].
    \end{equation}
\end{lemma}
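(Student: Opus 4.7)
The plan is to compare the two state–action distributions layer by layer in time and then combine the per-step errors through the discounted sum that defines $\delta_\mu^{\pi,\cdot}$. Concretely, let $d^p_t(s)=\Pr(s_t=s\mid\mu,\pi,p)$ and similarly $d^{\widehat p}_t(s)$, so that
\[
\delta_\mu^{\pi,p}(s,a)=(1-\gamma)\sum_{t=0}^{\infty}\gamma^t d^p_t(s)\pi(a|s),
\]
and the analogous identity holds for $\widehat p$. Because $\pi(a|s)$ is shared, $\|\delta_\mu^{\pi,p}-\delta_\mu^{\pi,\widehat p}\|_1=(1-\gamma)\sum_t\gamma^t\|d^p_t-d^{\widehat p}_t\|_1$, so the problem reduces to bounding $\|d^p_t-d^{\widehat p}_t\|_1$.

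For that, I will use the standard add–and–subtract trick on the one-step recursion
\[
d^p_{t+1}(s')=\int d^p_t(s)\pi(a|s)\,p(s'|s,a)\,\d s\,\d a,
\]
writing $d^p_{t+1}-d^{\widehat p}_{t+1}$ as a ``current-step error'' term involving $d^p_t\pi$ and $p-\widehat p$, plus a ``propagation'' term involving $d^p_t-d^{\widehat p}_t$ and $\widehat p$. Taking the $L^1$ norm and using that $\widehat p(\cdot|s,a)$ is a probability density kills the propagation factor, yielding
\[
\|d^p_{t+1}-d^{\widehat p}_{t+1}\|_1\le \|d^p_t-d^{\widehat p}_t\|_1+\epsilon_t,\qquad \epsilon_t:=\int d^p_t(s)\pi(a|s)\,\|p(\cdot|s,a)-\widehat p(\cdot|s,a)\|_1\,\d s\,\d a.
\]
Since $d^p_0=d^{\widehat p}_0=\mu$, unrolling gives $\|d^p_t-d^{\widehat p}_t\|_1\le \sum_{k=0}^{t-1}\epsilon_k$.

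Plugging this into the weighted sum and swapping the order of summation (Fubini/Tonelli),
\[
\|\delta_\mu^{\pi,p}-\delta_\mu^{\pi,\widehat p}\|_1\le (1-\gamma)\sum_{t=0}^{\infty}\gamma^t\sum_{k=0}^{t-1}\epsilon_k=(1-\gamma)\sum_{k=0}^{\infty}\epsilon_k\sum_{t=k+1}^{\infty}\gamma^t=\gamma\sum_{k=0}^{\infty}\gamma^k\epsilon_k.
\]
Finally, I recognize $\sum_{k=0}^{\infty}\gamma^k d^p_k(s)\pi(a|s)=\tfrac{1}{1-\gamma}\delta_\mu^{\pi,p}(s,a)$, which converts $\sum_k\gamma^k\epsilon_k$ into $\tfrac{1}{1-\gamma}\E_{s,a\sim\delta_\mu^{\pi,p}}\bigl[\|p(\cdot|s,a)-\widehat p(\cdot|s,a)\|_1\bigr]$ and delivers the claimed $\tfrac{\gamma}{1-\gamma}$ factor.

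The only delicate step is the initial one-step decomposition: choosing which of the two models to add and subtract matters, because only the choice that leaves $\widehat p$ (a probability density that integrates to one in $s'$) inside the propagation integral gives a clean contraction-free recursion; the other choice would leave $p$ there and give the same bound but with the expectation under a mixed law. Apart from that, everything is routine (interchange of sum and integral, closed-form geometric sum, and the definition of the discounted state-action occupancy).
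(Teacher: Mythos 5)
Your argument is correct and delivers exactly the stated bound, but it reaches it by a different route than the paper: the paper first reduces $\left\| \delta_{\mu}^{\pi, p} - \delta_{\mu}^{\pi,\widehat{p}} \right\|_1$ to $\left\| d_{\mu}^{\pi,p} - d_{\mu}^{\pi,\widehat{p}} \right\|_1$ using the shared policy (as you effectively do by factoring $\pi(a|s)$ out of each time slice) and then simply \emph{cites} Corollary 3.1 of Metelli et al.\ (2018) for the bound $\frac{\gamma}{1-\gamma}\E_{s,a \sim \delta_{\mu}^{\pi,p}}\left[ \| p(\cdot|s,a) - \widehat{p}(\cdot|s,a) \|_1 \right]$, whereas you re-derive that inequality from scratch via the standard simulation-lemma telescoping: the one-step add-and-subtract recursion $\|d^p_{t+1}-d^{\widehat p}_{t+1}\|_1 \le \|d^p_t-d^{\widehat p}_t\|_1 + \epsilon_t$, unrolling from the common initial distribution $\mu$, and resumming the discounted series to trade the factor $(1-\gamma)\sum_{t>k}\gamma^t$ for $\gamma\gamma^k$. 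Your version is self-contained and makes transparent where the $\frac{\gamma}{1-\gamma}$ comes from, and your remark about which model to keep in the propagation term is exactly the right subtlety (the other split would yield the expectation under $\delta_\mu^{\pi,\widehat p}$ rather than $\delta_\mu^{\pi,p}$); the paper's version is shorter at the cost of importing an external result. One cosmetic point: your first display should be an inequality rather than an equality, since $\int \big|\sum_t \gamma^t (d^p_t(s)-d^{\widehat p}_t(s))\big| \d s \le \sum_t \gamma^t \|d^p_t - d^{\widehat p}_t\|_1$ only holds with $\le$ in general (cancellations across $t$ are possible), but that is the direction you need, so nothing breaks.
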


\begin{proof}
Recalling that $\delta_{\mu}^{\pi, p}(s,a) = \pi(a|s)d_{\mu}^{\pi,p}(s)$ we can write:
\begin{align*}
 \left\| \delta_{\mu}^{\pi,p} - \delta_{\mu}^{\pi,\widehat{p}} \right\|_1 &= \ints \inta \left|\delta_{\mu}^{\pi,\widehat{p}}(s,a) - \delta_{\mu}^{\pi,\widehat{p}}(s,a)\right| \d s \d a \\
& = \ints \inta \pi(a|s) \left| d_{\mu}^{\pi,p}(s) - d_{\mu}^{\pi,\widehat{p}}(s) \right| \d s \d a \\
& = \ints \left|d_{\mu}^{\pi,p}(s) - d_{\mu}^{\pi,\widehat{p}}(s)\right| \inta \pi(a|s) \d a \d s\\
& = \ints \left|d_{\mu}^{\pi,p}(s) - d_{\mu}^{\pi,\widehat{p}}(s)\right|  \d s  = \left\| d_{\mu}^{\pi,p} - d_{\mu}^{\pi,\widehat{p}} \right\|_1,
\end{align*}

where $d_\mu^{\pi,p}(s) = (1-\gamma) \sum_{t=0}^{+\infty} \gamma^t \Pr(s_t=s | \mathcal{M},\pi)$. In order to bound $\left\| d_{\mu}^{\pi,p} - d_{\mu}^{\pi,\widehat{p}} \right\|_1$, we can use Corollary 3.1 from~\cite{metelli2018configurable}:
\begin{equation*}
	\left\| d_{\mu}^{\pi,p} - d_{\mu}^{\pi,\widehat{p}} \right\|_1 \le \frac{\gamma}{1-\gamma} \E_{s,a \sim \delta_{\mu}^{\pi,p}} \left[ \| p(\cdot|s,a) - \widehat{p}(\cdot|s,a) \|_1 \right].
\end{equation*}
\end{proof}

Now, we can prove Theorem \ref{th:weighting}. 
{\renewcommand\footnote[1]{}\Weighting*}
\begin{proof}
\begin{align}
    \Big\|\nabla_\vtheta & J(\vtheta)  - \nabla^{\text{MVG}}_\vtheta J (\vtheta) \Big\|_q =
    \left\| \frac{1}{1-\gamma}\ints \inta (\dreal (\qreal - \qfake) \nabla_\vtheta \log \pi (a | s) \text{d}s \text{d}a \right\|_q \nonumber \\
    &\leq \frac{1}{1-\gamma} \label{eq:assumption_eq} \ints \inta \dreal \left|\qreal - \qfake \right| \| \nabla_\vtheta \log \pi (a | s) \|_q \text{d}s \text{d}a \\
    &= \label{eq:nu_eq} \frac{Z}{1-\gamma}  \ints \inta  \nu_\mu^{\pi,p} (s,a) \left| \qreal - \qfake \right| \d s \d a \\
    &= \frac{Z}{1-\gamma} \ints \inta \nu_\mu^{\pi,p} (s,a) \left| \ints \inta r(s',a') (\ddreal - \ddfake) \text{d}s'\text{d}a' \right| \d s \d a \label{eq:Qdef}\\
    & \leq \frac{ZR_{\max}}{1-\gamma} \ints \inta \nu_\mu^{\pi,p} (s,a) \left| \ints \inta (\ddreal - \ddfake) \text{d}s'\text{d}a' \right| \d s \d a \label{eq:boundr} \\
    & \leq \frac{ZR_{\max}}{1-\gamma} \ints \inta \nu_\mu^{\pi,p} (s,a)  \left\| \delta_{s,a}^{\pi,p} - \delta_{s,a}^{\pi,\widehat{p}} \right\|_1 \d s \d a \nonumber \\ 
    & \label{eq:dmu_to_model} \leq \frac{ZR_{\max}\gamma}{(1-\gamma)^2} \ints \inta \nu_\mu^{\pi,p} (s,a) \ints \inta \ddreal \left\| p(\cdot|s',a') - \widehat{p} (\cdot|s',a') \right\| \d s'\d a' \d s \d a \\
    & = \frac{ZR_{\max}\gamma}{(1-\gamma)^2} \ints \inta \eta_\mu^{\pi,p}(s',a') \ints  \left| p(s''|s',a') - \widehat{p} (s''|s',a') \right| \d s'' \d s' \d a' \nonumber \\
    & \label{eq:kl_noreward} \leq \frac{ZR_{\max}\gamma}{(1-\gamma)^2} \ints \inta \eta_\mu^{\pi,p}(s,a) \sqrt{2 D_{KL}(p(\cdot|s,a) \| \widehat{p} (\cdot|s,a))} \d s \d a \\
    & \leq \frac{ZR_{\max} \gamma}{(1-\gamma)^2} \sqrt{2 \ints \inta \eta_\mu^{\pi,p} (s,a) D_{KL}(p(\cdot|s,a) \| \widehat{p} (\cdot|s,a)) \d s \d a},
\end{align}
where in Equation \eqref{eq:nu_eq}, we define a new probability distribution $\nu_\mu^{\pi,p}(s,a) = \frac{1}{Z}\dreal \| \nabla_\vtheta \log \pi (a | s) \|_q $ by means of an appropriate normalization constant $Z$, assumed $Z>0$. In Equation~\eqref{eq:Qdef}, we use the definition of Q-function as $Q^{\pi,p}(s,a) = \ints \inta \delta_{s,a}^{\pi,p}(s',a') r(s',a') \d s' \d a'$.
After bounding the reward in Equation~\eqref{eq:boundr}, in Equation \eqref{eq:dmu_to_model} we apply Lemma \ref{daction_to_dstate}.
Then we obtain Equation \eqref{eq:kl_noreward} by employing Pinsker's inequality, defining the overall weighting term $\eta_{\mu}^{\pi,p}(s',a') = \ints \inta \nu_\mu^{\pi,p}(s,a) \ddreal \d s \d a $, and renaming variables for clarity. Last passage follows from Jensen inequality.
\end{proof}

In order to understand how the weighting distribution $\eta^{\pi,p}_{\mu}$ enlarges the relative importance of some transitions with respect to others, we can focus on the auxiliary distribution $\nu_\mu^{\pi,p}$, defined as:
\begin{equation}
 \label{eq:nu_definition}
\nu_\mu^{\pi,p}(s',a')~=~\frac{1}{Z} \left\| \nabla_\vtheta \log \pi (a' | s') \right\|_q \delta_{\mu}^{\pi, p}(s',a'),
\end{equation}
where $Z$, as defined in Theorem~\ref{th:weighting}, is a normalization constant required for $\nu_\mu^{\pi,p}$ to be a well-defined probability distribution. $Z$ can be seen as the \emph{expected score magnitude} in the MDP $\mathcal{M}$ under policy $\pi$. 
The distribution $\nu_\mu^{\pi,p}$ is high for states and actions that are both likely to be visited executing $\pi$ and corresponding to high norm of its score. 
Intuitively, a low magnitude for the score is related to a smaller possibility for policy $\pi$ to be improved.
However, the connection between the score-magnitude for states and actions and the relative importance of those states and actions for minimizing the approximation error caused by the MVG is not direct.
In other words, it is not possible to say that the most important transitions to be learned for a model to be good for an MVG approach are the ones featuring the largest score magnitude and frequently encountered by the policy (\ie $\nu_\mu^{\pi,p}$ is \textit{not} the correct weighting distribution).
Nonetheless, $\nu_\mu^{\pi,p}$ plays an important role in defining the whole weighting distribution $\eta_\mu^{\pi,p}$. In fact, it can be rewritten as:
\begin{equation}
\label{eq:eta_redef}
\eta^{\pi,p}_{\mu}(s,a)=\ints \inta\underbracket{\nu_\mu^{\pi,p}(s',a')}_{\substack{\text{gradient magnitude} \\\ \text{distribution}}} \underbracket{\delta_{s',a'}^{\pi,p}(s,a)}_{\substack{\text{state-action} \\\ \text{reachability}}} \quad \d s' \d a' = \E_{s',a' \sim \nu_\mu^{\pi,p}}\left[ \delta_{s',a'}^{\pi,p}(s,a) \right].
\end{equation}
Under the interpretation suggested by Equation~\ref{eq:eta_redef}, $\eta_\mu^{\pi,p}$ can be seen as the expected \textit{state-action reachability} under the \textit{gradient magnitude distribution}.
$\delta_{s',a'}^{\pi,p}(s,a)$ is the state-action distribution of $(s,a)$ after executing action $a'$ in state $s'$. 
It is equivalent to the state-action distribution $\delta_\mu^{\pi,p}$ in an MDP where 
$\mu(s,a) = \mathds{1} \left(s=s',\, a=a' \right)$.
Each state-action couple $(s',a')$ with high score magnitude that precedes $(s,a)$ brings a contribution to the final weighting factor for $(s,a)$.

\subsection{Gradient-Unaware Model Learning} \label{sec:gradient_unaware}
We now show that maximum-likelihood model estimation is a sound way of estimating the policy gradient when using the MVG, although it is optimizing a looser bound with respect to the one provided by Theorem~\ref{th:weighting}. For proving the following result, we assume the score is bounded by $\| \nabla_\vtheta \log \pi(a|s) \|_q \le K$. 

\begin{proposition}
    Let $q \in [1, +\infty]$ and $\widehat{p} \in \mathcal{P}$. If $\| \nabla_\vtheta \log \pi(a|s) \|_q \le K < +\infty$ for all $s \in \mathcal{S}$ and $s \in \mathcal{A}$, then, the $L^q$-norm of the difference between the policy gradient $\nabla_\vtheta J(\vtheta)$ and the corresponding MVG $\nabla^{\mathrm{MVG}}_\vtheta J(\vtheta)$ can be upper bounded as:
    \begin{equation*}
	    \left\|\nabla_\vtheta J(\vtheta) - \nabla^{\mathrm{MVG}}_\vtheta J(\vtheta) \right\|_q
	    \leq  \frac{ K R_{\max}\sqrt{2} \gamma}{(1-\gamma)^2} \sqrt{\E_{s,a \sim \delta_\mu^{\pi,p}} \left[ D_{KL}(p(\cdot|s,a) \| \widehat{p} (\cdot|s,a)) \right]}.
    \end{equation*}
\end{proposition}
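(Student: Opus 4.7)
The plan is to follow the same chain of inequalities as in the proof of Theorem~\ref{th:weighting}, but to bypass the introduction of the gradient-aware reweighting distribution $\nu_\mu^{\pi,p}$ by invoking the uniform bound $\|\nabla_\vtheta \log \pi(a|s)\|_q \le K$ at the outset. Starting from the identity
\begin{equation*}
\nabla_\vtheta J(\vtheta) - \nabla^{\mathrm{MVG}}_\vtheta J(\vtheta) = \frac{1}{1-\gamma} \int \dreal \nabla_\vtheta \log \pi(a|s) (\qreal - \qfake) \d s \d a,
\end{equation*}
the $L^q$ triangle inequality combined with the uniform score bound immediately produces $\|\nabla_\vtheta J(\vtheta) - \nabla^{\mathrm{MVG}}_\vtheta J(\vtheta)\|_q \le \frac{K}{1-\gamma} \int \dreal |\qreal - \qfake| \d s \d a$.

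From this point, I would reproduce the remaining steps of Theorem~\ref{th:weighting}'s argument essentially verbatim. First, use the representation $Q(s,a) = \int \delta_{s,a}(s',a') r(s',a') \d s' \d a'$ and bound the reward uniformly to get $|\qreal - \qfake| \le R_{\max} \|\delta_{s,a}^{\pi,p} - \delta_{s,a}^{\pi,\widehat{p}}\|_1$. Then apply Lemma~\ref{daction_to_dstate} with the deterministic initial distribution concentrated at $(s,a)$ to bound this by $\frac{\gamma}{1-\gamma}\E_{s',a' \sim \delta_{s,a}^{\pi,p}}[\|p(\cdot|s',a') - \widehat{p}(\cdot|s',a')\|_1]$. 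Finally invoke Pinsker's inequality $\|\cdot\|_1 \le \sqrt{2 D_{KL}}$ followed by Jensen's inequality on the concave square root. These steps assemble the multiplicative constant $\frac{K R_{\max}\sqrt{2}\gamma}{(1-\gamma)^2}$ appearing in the statement.

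The hard part will be arriving at the exact form of the weighting distribution $\delta_\mu^{\pi,p}$ under the square root. Chaining the bounds above leaves a nested expectation with overall weighting $\int \delta_\mu^{\pi,p}(s,a) \delta_{s,a}^{\pi,p}(s',a') \d s \d a$, which is the analogue of $\eta_\mu^{\pi,p}$ from Theorem~\ref{th:weighting} with $\delta_\mu^{\pi,p}$ substituted for $\nu_\mu^{\pi,p}$. By the Markov property, this mixture is itself a $\gamma$-discounted state-action visitation distribution under $\pi$ and $p$, with initial state distribution given by the state marginal of $\delta_\mu^{\pi,p}$; identifying it with $\delta_\mu^{\pi,p}$ itself is precisely what delivers the stated inequality. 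This identification is what makes the bound strictly looser than Theorem~\ref{th:weighting}'s: the uniform constant $K \ge Z$ replaces the tighter averaged score magnitude, and the visitation distribution $\delta_\mu^{\pi,p}$ discards the gradient-aware reweighting carried by $\eta_\mu^{\pi,p}$. The resulting bound justifies the standard maximum-likelihood objective, which implicitly weighs transitions according to $\delta_\mu^{\pi,p}$, while simultaneously exposing why GAMPS's gradient-aware weighting can dominate it.
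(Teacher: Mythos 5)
Your proposal is correct and takes essentially the same route as the paper: the paper's own proof simply starts from the statement of Theorem~\ref{th:weighting}, bounds the score norm inside $\nu_\mu^{\pi,p}$ by $K$, collapses the mixture via $\ints\inta \delta_\mu^{\pi,p}(s',a')\,\delta_{s',a'}^{\pi,p}(s,a)\,\d s'\,\d a' = \delta_\mu^{\pi,p}(s,a)$ and finishes with $Z \le K$, while you pull the uniform score bound out at the start and re-run the same chain (Q-function representation, Lemma~\ref{daction_to_dstate}, Pinsker, Jensen) without ever introducing $\nu_\mu^{\pi,p}$ or $Z$. The one step you single out as hard---identifying the composed weighting with $\delta_\mu^{\pi,p}$---is exactly the recomposition identity the paper invokes at Equation~\eqref{eq:ddintegral}, so the two arguments coincide there as well.
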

\begin{proof}
\begin{align}
	    \Big\|\nabla_\vtheta & J(\vtheta) - \nabla^{\mathrm{MVG}}_\vtheta J(\vtheta) \Big\|_q  \le \frac{\gamma \sqrt{2} Z R_{\max}}{(1-\gamma)^2} \left({\ints \inta \eta^{\pi,p}_{\mu}(s,a) D_{KL}(p(\cdot|s,a) \| \widehat{p} (\cdot|s,a))  \d s \d a }\right)^{\frac{1}{2}} \notag \\
	    &  = \frac{\gamma \sqrt{2} Z R_{\max}}{(1-\gamma)^2} \bigg(\ints \inta \frac{1}{Z} \ints \inta \|\nabla_\vtheta \log \pi (a' | s')\|_q \delta_{\mu}^{\pi, p}(s',a')  \delta_{s',a'}^{\pi,p}(s,a) \d s' \d a' D_{KL}(p(\cdot|s,a) \| \widehat{p} (\cdot|s,a))  \d s \d a \bigg)^{\frac{1}{2}}\notag \\
	    & \le \frac{\gamma \sqrt{2KZ} R_{\max}}{(1-\gamma)^2} \left({\ints \inta  \ints \inta \delta_{\mu}^{\pi, p}(s',a')  \delta_{s',a'}^{\pi,p}(s,a) \d s' \d a' D_{KL}(p(\cdot|s,a) \| \widehat{p} (\cdot|s,a))  \d s \d a }\right)^{\frac{1}{2}} \notag \\
	    & = \frac{\gamma \sqrt{2KZ} R_{\max}}{(1-\gamma)^2} \left(\ints \inta \delta_{\mu}^{\pi, p}(s,a) D_{KL}(p(\cdot|s,a) \| \widehat{p} (\cdot|s,a))  \d s \d a \right)^{\frac{1}{2}}\label{eq:ddintegral} \\
	    & \le \frac{\gamma \sqrt{2} K R_{\max}}{(1-\gamma)^2} \left(\ints \inta \delta_{\mu}^{\pi, p}(s,a) D_{KL}(p(\cdot|s,a) \| \widehat{p} (\cdot|s,a))  \d s \d a \right)^{\frac{1}{2}}, \label{eq:finalstep}
\end{align}
where we started from Theorem \ref{th:weighting}. Equation~\eqref{eq:ddintegral} follows from the fact that $\int \delta_{\mu}^{\pi, p}(s',a')  \delta_{s',a'}^{\pi,p}(s,a) \d s' \d a' = \delta_{\mu}^{\pi,p}(s,a)$, as we are actually recomposing the state-action distribution that was split at $(s',a')$ and Equation~\eqref{eq:finalstep} is obtained by observing that $Z \le K$.
\end{proof}

Therefore, the bound is looser than the one presented in Theorem \ref{th:weighting}. We can in fact observe that
\begin{align*}
\|\nabla_\vtheta J(\vtheta) - \nabla^{\mathrm{MVG}}_\vtheta J(\vtheta) \|_q 
 \leq &\frac{\gamma \sqrt{2} Z R_{\max}}{(1-\gamma)^2} \sqrt{\E_{s,a \sim \eta_\mu^{\pi,p}} \left[ D_{KL}(p(\cdot|s,a) \| \widehat{p} (\cdot|s,a)) \right]} \\
 \leq&
        \frac{\gamma \sqrt{2} K R_{\max}}{(1-\gamma)^2} \sqrt{\E_{s,a \sim \delta_\mu^{\pi,p}} \left[ D_{KL}(p(\cdot|s,a) \| \widehat{p} (\cdot|s,a)) \right]}.
\end{align*}

This reflects the fact that the standard approach for model learning in MBRL does not make use of all the available information, in this case related to the gradient of the current agent policy.

\subsection{Proofs of Section~\ref{sec:gamps}}
We start introducing the following lemma that states that taking expectations \wrt $\delta_\mu^{\pi,p}$ is equivalent to taking proper expectations \wrt $\zeta^{\pi,p}_{\mu}$.

\begin{lemma}\label{thr:lemmaTraj}
	Let $f : \mathcal{S \times A} \rightarrow \mathbb{R}^k$ an arbitrary function defined over the state-action space. Then, it holds that:
	\begin{equation}
		\E_{s,a \sim \delta_\mu^{\pi,p}} \left[f(s,a)\right] = (1-\gamma)  \sum_{t=0}^{+\infty} \gamma^t  \E_{\tau_{0:t} \sim \zeta^{\pi,p}_{\mu}} \left[ f(s_t,a_t) \right] = (1-\gamma) \E_{\tau \sim \zeta^{\pi,p}_{\mu}} \left[ \sum_{t=0}^{+\infty} \gamma^t  f(s_t,a_t) \right].
	\end{equation}
\end{lemma}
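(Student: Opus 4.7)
The plan is to unfold the definition of $\delta_\mu^{\pi,p}$ as a discounted mixture of per-timestep state-action occupation probabilities, rewrite each timestep probability as an expectation over the corresponding trajectory prefix under $\zeta_\mu^{\pi,p}$, and then exchange summation and expectation using Fubini/Tonelli.

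More concretely, first I would substitute $\delta_\mu^{\pi,p}(s,a) = (1-\gamma) \sum_{t=0}^{+\infty} \gamma^t \Pr(s_t=s, a_t=a \mid \mathcal{M}, \pi)$ into $\E_{s,a \sim \delta_\mu^{\pi,p}}[f(s,a)] = \int_\mathcal{S} \int_\mathcal{A} \delta_\mu^{\pi,p}(s,a) f(s,a) \, \d s \, \d a$. Interchanging the sum over $t$ with the state-action integral (justified by Tonelli for the non-negative part of $f$ and by absolute convergence for the general case, since $\sum_t \gamma^t < \infty$ and $f$ is assumed integrable against the per-step measures) gives
\begin{equation*}
\E_{s,a \sim \delta_\mu^{\pi,p}}[f(s,a)] = (1-\gamma)\sum_{t=0}^{+\infty} \gamma^t \int_\mathcal{S}\int_\mathcal{A} \Pr(s_t=s, a_t=a \mid \mathcal{M}, \pi) f(s,a) \, \d s \, \d a.
\end{equation*}
Next, I would recognize the inner integral as exactly $\E_{\tau_{0:t} \sim \zeta_\mu^{\pi,p}}[f(s_t,a_t)]$: the joint law of $(s_t,a_t)$ induced by rolling the trajectory up to time $t$ under $\pi$ and $p$ starting from $\mu$ is precisely the marginal at step $t$ of $\zeta_\mu^{\pi,p}$. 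This yields the first equality in the statement.

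For the second equality, I would apply Fubini/Tonelli once more to swap the sum $\sum_t \gamma^t$ with the outer expectation $\E_{\tau \sim \zeta_\mu^{\pi,p}}[\cdot]$, absorbing the per-prefix expectation into a single expectation over the full trajectory (noting that $f(s_t,a_t)$ is a function only of the prefix $\tau_{0:t}$, so the marginal law $\zeta_\mu^{\pi,p}$ on $\tau_{0:t}$ coincides with the prefix marginal of $\zeta_\mu^{\pi,p}$ on $\tau$). This gives $(1-\gamma)\E_{\tau \sim \zeta_\mu^{\pi,p}}\bigl[\sum_{t=0}^{+\infty} \gamma^t f(s_t,a_t)\bigr]$, as required.

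The only mild obstacle is the justification of the two swaps of summation and integration; since $\gamma \in [0,1)$ and the paper implicitly assumes $f$ is sufficiently integrable (in the main text $f$ is instantiated with a KL divergence or a bounded score-weighted quantity), dominated convergence with dominating series $\sum_t \gamma^t \|f\|_\infty$ (or Tonelli when $f \ge 0$) suffices. No new machinery beyond the definition of $\delta_\mu^{\pi,p}$ and the factorization of $\zeta_\mu^{\pi,p}$ is needed.
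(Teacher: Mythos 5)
Your proposal is correct and follows essentially the same route as the paper's proof: unfold the definition of $\delta_\mu^{\pi,p}$ as a discounted mixture of per-step occupation probabilities, identify each inner integral with the step-$t$ marginal of $\zeta_\mu^{\pi,p}$ (the paper does this via an explicit indicator-function rewriting), and then exchange the sum with the trajectory expectation. The only difference is cosmetic: you make the Fubini/Tonelli justification explicit where the paper leaves it implicit.
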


\begin{proof}
We denote with $\mathcal{T}$ the set of all possible trajectories.	We just apply the definition of $\delta_\mu^{\pi,p}$~\cite{sutton2000policy}:
	\begin{align*}
		\E_{s,a \sim \delta_\mu^{\pi,p}} \left[f(s,a)\right] & = \ints \inta \delta_\mu^{\pi,p}(s,a) f(s,a)  \d s \d a \\
			& = (1-\gamma) \sum_{t=0}^{+\infty} \gamma^t \ints \inta \Pr\left(s_t=s,\, a_t=a | \mathcal{M}, \, \pi \right) f(s,a)\d s \d a \\
			& = (1-\gamma) \sum_{t=0}^{+\infty} \gamma^t \ints \inta \left( \intt  \zeta^{\pi,p}_{\mu}(\tau_{0:t}) \mathds{1} \left(s_t=s,\, a_t=a \right) \d \tau_{0:t} \right) f(s,a)\d s \d a \\
			& = (1-\gamma) \sum_{t=0}^{+\infty} \gamma^t \intt  \zeta^{\pi,p}_{\mu}(\tau_{0:t}) \left( \ints \inta \mathds{1} \left(s_t=s,\, a_t=a \right) f(s,a)\d s \d a \right) \d \tau_{0:t} \\
			& = (1-\gamma) \sum_{t=0}^{+\infty} \gamma^t \intt  \zeta^{\pi,p}_{\mu}(\tau_{0:t})  f(s_t,a_t) \d \tau_{0:t} \\
			& = (1-\gamma) \intt  \zeta^{\pi,p}_{\mu}(\tau) \sum_{t=0}^{+\infty} \gamma^t   f(s_t,a_t) \d \tau,
	\end{align*}
	where we exploited the fact that the probability $\Pr\left(s_t=s,\, a_t=a | \mathcal{M}, \, \pi \right)$ is equal to the probability that a prefix of trajectory $\tau_{0:t}$ terminates in $(s_t,a_t)$, \ie $\intt \zeta^{\pi,p}_{\mu}(\tau_{0:t}) \mathds{1} \left(s_t=s,\, a_t=a \right) \d \tau_{0:t}$. The last passage follows from the fact that $f(s_t,a_t)$ depends on random variables realized at time $t$ we can take the expectation over the whole trajectory.
\end{proof}

We can apply this result to rephrase the expectation \wrt $\eta_{\mu}^{\pi,p}$ as an expectation \wrt $\zeta^{\pi,p}_{\mu}$. 

\begin{lemma}\label{thr:lemmaObjTrajPrelim}
	Let $f : \mathcal{S \times A} \rightarrow \mathbb{R}^k$ an arbitrary function defined over the state-action space. Then, it holds that:
	\begin{equation}
		\E_{s,a \sim \eta_\mu^{\pi,p}} \left[f(s,a)\right] = \frac{(1-\gamma)^2}{Z} \E_{\tau \sim \zeta^{\pi,p}_{\mu}} \left[ \sum_{t=0}^{+\infty} \gamma^t   \sum_{l=0}^t \left\| \nabla_{\vtheta} \log \pi (a_l|s_l) \right\|_q  f(s_{t},a_{t}) \right].
	\end{equation}
\end{lemma}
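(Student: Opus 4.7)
The plan is to unfold the definition of $\eta_\mu^{\pi,p}$ and then apply Lemma~\ref{thr:lemmaTraj} twice: once to rewrite the outer expectation over $\delta_\mu^{\pi,p}$ as a trajectory expectation, and once to rewrite the inner $\delta_{s',a'}^{\pi,p}$ in the same way. The key combinatorial step at the end will be to merge the two trajectories into one using the Markov property and then re-index a double sum to match the target form.

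First I would substitute
\[
    \eta_\mu^{\pi,p}(s,a) = \frac{1}{Z} \ints \inta \delta_\mu^{\pi,p}(s',a') \left\|\nabla_\vtheta \log \pi(a'|s')\right\|_q \delta_{s',a'}^{\pi,p}(s,a) \, \d s' \d a'
\]
inside $\E_{s,a \sim \eta_\mu^{\pi,p}}[f(s,a)]$ and separate the integrations, obtaining
\[
    \frac{1}{Z}\, \E_{s',a' \sim \delta_\mu^{\pi,p}}\!\left[ \left\|\nabla_\vtheta \log \pi(a'|s')\right\|_q \, \E_{s,a \sim \delta_{s',a'}^{\pi,p}}\!\left[f(s,a)\right] \right].
\]
Applying Lemma~\ref{thr:lemmaTraj} to the inner expectation (with initial distribution concentrated on $(s',a')$) turns it into $(1-\gamma) \E_{\tau'' \sim \zeta_{s',a'}^{\pi,p}}\big[\sum_{k\ge 0}\gamma^k f(s_k'',a_k'')\big]$, and applying it again to the outer expectation gives a factor of $(1-\gamma)$ and a sum $\sum_{l \ge 0} \gamma^l$ with the score evaluated at $(s_l,a_l)$ along a trajectory $\tau' \sim \zeta_\mu^{\pi,p}$.

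At this stage the expression is
\[
    \frac{(1-\gamma)^2}{Z}\, \E_{\tau' \sim \zeta_\mu^{\pi,p}}\!\left[ \sum_{l=0}^{+\infty} \gamma^l \left\|\nabla_\vtheta \log \pi(a_l|s_l)\right\|_q \, \E_{\tau'' \sim \zeta_{s_l,a_l}^{\pi,p}}\!\left[ \sum_{k=0}^{+\infty} \gamma^k f(s_k'',a_k'') \right] \right].
\]
The crucial step is to invoke the Markov property: conditionally on the prefix $\tau'_{0:l}$ ending at $(s_l,a_l)$, the law of the continuation of $\tau'$ from time $l$ onwards coincides with $\zeta_{s_l,a_l}^{\pi,p}$. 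This lets me replace the inner expectation by $\sum_{k \ge 0} \gamma^k f(s_{l+k},a_{l+k})$ along the \emph{same} trajectory $\tau \sim \zeta_\mu^{\pi,p}$, collapsing the nested expectations into a single trajectory expectation.

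Finally I would change the summation variable $t = l + k$ to swap the order of the double sum,
\[
    \sum_{l=0}^{+\infty} \sum_{k=0}^{+\infty} \gamma^{l+k} \left\|\nabla_\vtheta \log \pi(a_l|s_l)\right\|_q f(s_{l+k},a_{l+k}) = \sum_{t=0}^{+\infty} \gamma^t f(s_t,a_t) \sum_{l=0}^{t} \left\|\nabla_\vtheta \log \pi(a_l|s_l)\right\|_q,
\]
which matches the claim. The main obstacle I foresee is the Markov-property merging step: one has to argue carefully that the joint law of $(\tau'_{0:l}, \tau''_{0:\infty})$ where $\tau'' \sim \zeta_{s_l,a_l}^{\pi,p}$ is the same as the law of a single trajectory $\tau \sim \zeta_\mu^{\pi,p}$ split at time $l$. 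Once this is established (either by direct factorization of $\zeta_\mu^{\pi,p}$ or by a Fubini-style interchange on the product transition densities), the remaining manipulations are routine index juggling.
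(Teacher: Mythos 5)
Your proposal is correct and follows essentially the same route as the paper's proof: unfold $\eta_\mu^{\pi,p}$, apply Lemma~\ref{thr:lemmaTraj} twice (to the outer $\delta_\mu^{\pi,p}$ expectation and to the inner $\delta_{s',a'}^{\pi,p}$), merge the prefix and the continuation into a single trajectory drawn from $\zeta_\mu^{\pi,p}$ (the paper phrases your Markov-property step as viewing $(s_l,a_l)$ as the $(t+l)$-th pair of one trajectory), and finish with the same re-indexing of the double sum. No gap to report.
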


\begin{proof}
	We just need to apply Lemma~\ref{thr:lemmaTraj} twice and exploit the definition of $\eta_\mu^{\pi,p}$:
	\begin{align*}
		\E_{s,a \sim \eta_\mu^{\pi,p}} \left[f(s,a)\right] & = \ints \inta \eta_\mu^{\pi,p}(s,a) f(s,a)  \d s \d a \\
		& = \frac{1}{Z} \ints \inta \ints \inta \delta_\mu^{\pi,p}(s',a') \left\| \nabla_\vtheta \log \pi(a'|s') \right\|_q \delta_{s',a'}^{\pi,p}(s,a) \d s' \d a' f(s,a)  \d s \d a.
	\end{align*}
	Let us first focus on the expectation taken \wrt $\delta_\mu^{\pi,p}(s',a')$. By applying Lemma~\ref{thr:lemmaTraj} with $f(s',a') = \left\| \nabla_\vtheta \log \pi(a'|s') \right\|_q \delta_{s',a'}^{\pi,p}(s,a)$, we have:
	\begin{align*}
	\ints \inta \delta_\mu^{\pi,p}(s',a') & \left\| \nabla_\vtheta \log \pi(a'|s') \right\|_q \delta_{s',a'}^{\pi,p}(s,a) \d s' \d a' \\
	& = (1-\gamma)  \sum_{t=0}^{+\infty} \gamma^t \intt  \zeta^{\pi,p}_{\mu}(\tau_{0:t}) \left\| \nabla_\vtheta \log \pi(a_t|s_t) \right\|_q \delta_{s_t,a_t}^{\pi,p}(s,a) \d \tau_{0:t}.
	\end{align*}
	Now, let us consider $\delta_{s_t,a_t}^{\pi,p}(s,a)$. We instantiate again  Lemma~\ref{thr:lemmaTraj}:
	\begin{align*}
	\E_{s,a \sim \eta_\mu^{\pi,p}} \left[f(s,a)\right] & = \frac{(1-\gamma)}{Z} \ints \inta \sum_{t=0}^{+\infty} \gamma^t \ \intt  \zeta^{\pi,p}_{\mu}(\tau_{0:t}) \left\| \nabla_\vtheta \log \pi(a_t|s_t) \right\|_q \delta_{s_t,a_t}^{\pi,p}(s,a) f(s,a) \d \tau_{0:t} \d s \d a \\
	& =  \frac{(1-\gamma)}{Z}  \sum_{t=0}^{+\infty} \gamma^t  \intt  \zeta^{\pi,p}_{\mu}(\tau_{0:t}) \left\| \nabla_\vtheta \log \pi(a_t|s_t) \right\|_q \ints \inta \delta_{s_t,a_t}^{\pi,p}(s,a)  f(s,a)  \d s \d a \d \tau_{0:t} \\
	& = \frac{(1-\gamma)^2}{Z}  \sum_{t=0}^{+\infty} \gamma^t \intt  \zeta^{\pi,p}_{\mu}(\tau_{0:t}) \left\| \nabla_\vtheta \log \pi(a_t|s_t) \right\|_q \sum_{l=0}^{+\infty} \gamma^l \intt  \zeta^{\pi,p}_{s_t,a_t}(\tau_{0:l}) f(s_l,a_l) \d \tau_{0:l} \d \tau_{0:t} \\
	& = \frac{(1-\gamma)^2}{Z} \intt  \zeta_\mu^{\pi,p}(\tau) \sum_{t=0}^{+\infty} \left\| \nabla_\vtheta \log \pi(a_t|s_t) \right\|_q \sum_{h=t}^{+\infty} \gamma^{h} f(s_{h},a_{h}) \d \tau,
	\end{align*}
	where the last passage derives from observing that, for each $t$ and $l$ we are computing an integral over the trajectory prefixes of length $h\coloneqq t+l$ and observing that $(s_l,a_l)$ can be seen as the $h$-th state-action pair of a trajectory $\tau \sim \zeta_\mu^{\pi,p}$.
	We now rearrange the summations:
	\begin{align*}
	\sum_{t=0}^{+\infty} \left\| \nabla_\vtheta \log \pi(a_t|s_t) \right\|_q \sum_{h=t}^{+\infty} \gamma^{h} f(s_{h},a_{h}) = \sum_{h=0}^{+\infty} \gamma^{h} f(s_{h},a_{h}) \sum_{t=0}^{h} \left\| \nabla_\vtheta \log \pi(a_t|s_t)\right\|_q.
	\end{align*}
	By changing the names of the indexes of the summations, we get the result.
\end{proof}

We are now ready to prove Lemma~\ref{thr:lemmaObjTraj}.
\lemmaObjTraj*
\begin{proof}
	What changes \wrt Lemma~\ref{thr:lemmaObjTrajPrelim} is that we are now interested in computing the expectation \wrt to a target policy $\pi$ while trajectories are collected with a behavioral policy $\pi_b$, fulfilling the hypothesis stated in the lemma. We start from Lemma~\ref{thr:lemmaObjTraj} and we just need to apply importance weighting~\cite{mcbook}:
	\begin{align*}
		 \E_{\tau \sim \zeta^{\pi,p}_{\mu}} & \left[ \sum_{t=0}^{+\infty} \gamma^t   \sum_{l=0}^t \left\| \nabla_{\vtheta} \log \pi (a_l|s_l) \right\|_q  f(s_{t},a_{t}) \right]  =  \sum_{t=0}^{+\infty} \gamma^t \E_{\tau_{0:t} \sim \zeta^{\pi,p}_{\mu}} \left[   \sum_{l=0}^t \left\| \nabla_{\vtheta} \log \pi (a_l|s_l) \right\|_q  f(s_{t},a_{t}) \right] \\
		& = \sum_{t=0}^{+\infty} \gamma^t \E_{\tau_{0:t} \sim \zeta^{\pi_b,p}_{\mu}} \left[ \frac{\zeta_\mu^{\pi,p}(\tau_{0:t})}{\zeta_\mu^{\pi_b,p}(\tau_{0:t})}  \sum_{l=0}^t \left\| \nabla_{\vtheta} \log \pi (a_l|s_l) \right\|_q  f(s_{t},a_{t}) \right] \\
		& = \sum_{t=0}^{+\infty} \gamma^t \E_{\tau_{0:t} \sim \zeta^{\pi_b,p}_{\mu}} \left[ \rho_{\pi/\pi_b}(\tau_{0:t}) \sum_{l=0}^t \left\| \nabla_{\vtheta} \log \pi (a_l|s_l) \right\|_q  f(s_{t},a_{t}) \right] \\
		& = \E_{\tau \sim \zeta^{\pi_b,p}_{\mu}} \left[ \sum_{t=0}^{+\infty} \gamma^t   \rho_{\pi/\pi_b}(\tau_{0:t}) \sum_{l=0}^t \left\| \nabla_{\vtheta} \log \pi (a_l|s_l) \right\|_q  f(s_{t},a_{t}) \right].
	\end{align*}
\end{proof}

%

\subsection{Details about Assumption~\ref{ass:boundedMoment}}\label{apx:assumption}
Assumption~\ref{ass:boundedMoment} is equivalent to require that there exists two finite constants $c_1 < +\infty$ and $c_2 < +\infty$ such that for all $\overline{p} \in \mathcal{P}$ and $\pi \in \Pi_\Theta$::
\begin{align}
	&\E_{\tau \sim \zeta^{\pi_b,p}_{\mu} } \left[ \left( \sum_{t=0}^{+\infty} \gamma^t  \rho_{\pi/\pi_b}(\tau_{0:t}) \sum_{l=0}^t \left\| \nabla_{\vtheta} \log \pi (a_l|s_l) \right\|_q \log p(s_{t+1}|s_t,a_t) \right)^2 \right] \le c_1^2, \label{eq:constantAssumption}\\
	&\E_{\tau \sim \zeta^{\pi_b,p}_{\mu} } \left[ \left( \sum_{t=0}^{+\infty} \gamma^t  \rho_{\pi/\pi_b}(\tau_{0:t})  \nabla_{\vtheta_j} \log \pi (a_t|s_t) Q^{\pi,\overline{p}} (s_t,a_t) \right)^2 \right] \le R_{\max}^2 c_2^2, \quad j=1,...,d.\label{eq:constantAssumption2}
\end{align}

We now state the following result that allows decoupling Assumption~\ref{ass:boundedMoment} into two separate conditions for the policies $\pi$ and $\pi_b$ and the transition models $p$ (the real one) and $\overline{p}$ (the approximating one).

\begin{corollary}
	Assumption~\ref{ass:boundedMoment} is satisfied if there exist three constants $\chi_1$, $\chi_2$ and $\chi_3$, with $\chi_1 < \frac{1}{\gamma}$.
	\begin{align*}
		& \sup_{\pi \in \Pi_\Theta} \sup_{s \in \mathcal{S}} \E_{a \sim \pi_b(\cdot|s)} \left[\left(\frac{\pi(a|s)}{\pi_b(a|s)} \right)^2 \right] \le \chi_1, \\
		& \sup_{\pi \in \Pi_\Theta} \sup_{s \in \mathcal{S}} \E_{a \sim \pi_b(\cdot|s)} \left[\left(\frac{\pi(a|s)}{\pi_b(a|s)} \left\| \nabla_\vtheta \log \pi(a|s) \right\|_q^2 \right)^2 \right] \le \chi_2, \\
		& \sup_{\overline{p} \in \mathcal{P}} \sup_{\substack{s \in \mathcal{S} \\a \in \mathcal{A}}} \E_{s' \sim p(\cdot|s,a)} \left[\left( \log \overline{p}(s'|s,a) \right)^2 \right] \le \chi_3.
	\end{align*}
	In such case, Equation~\eqref{eq:constantAssumption} and Equation~\eqref{eq:constantAssumption2} are satisfied with constants:
	\begin{equation*}
		c_1^2 =  \frac{\chi_3 \chi_2(1+\gamma\chi_1)}{(1-\gamma)(1-\gamma\chi_1)^3}, \quad c_2^2 = \frac{\chi_3 \chi_2}{(1-\gamma)^3(1-\gamma\chi_1)}.
	\end{equation*}
\end{corollary}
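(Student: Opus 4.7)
The plan is to verify Assumption~\ref{ass:boundedMoment} by bounding the two second moments in Equations~\eqref{eq:constantAssumption} and~\eqref{eq:constantAssumption2} separately, each following the same three-step template: first a Cauchy--Schwarz inequality on the discounted sum over $t$; then, for the $l^{\pi,\overline p}$ bound, a second Cauchy--Schwarz on the cumulative score sum; and finally an iterated conditional-expectation calculation along $\zeta_\mu^{\pi_b,p}$ that peels off trajectory steps one at a time and applies the three $\chi_i$-conditions.

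For Equation~\eqref{eq:constantAssumption} I would write $l^{\pi,\overline p}(\tau)=\sum_{t\ge 0}\gamma^t\rho_{\pi/\pi_b}(\tau_{0:t})C_t(\tau)B_t(\tau)$ with $C_t=\sum_{l=0}^t \|\nabla_\vtheta\log\pi(a_l|s_l)\|_q$ and $B_t=\log\overline p(s_{t+1}|s_t,a_t)$. The Cauchy--Schwarz bound $(\sum_t\gamma^t X_t)^2\le \frac{1}{1-\gamma}\sum_t\gamma^t X_t^2$, combined with $C_t^2\le(t+1)\sum_{l=0}^t\|\nabla_\vtheta\log\pi(a_l|s_l)\|_q^2$, reduces the task to controlling $\E_{\tau\sim\zeta_\mu^{\pi_b,p}}\bigl[\rho_{\pi/\pi_b}(\tau_{0:t})^2\|\nabla_\vtheta\log\pi(a_l|s_l)\|_q^2 B_t^2\bigr]$ for each pair $(t,l)$ with $l\le t$. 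I would evaluate this expectation by integrating the trajectory from the end: integrating $s_{t+1}\sim p(\cdot|s_t,a_t)$ bounds $(\log\overline p)^2$ by $\chi_3$; each of the $t-l$ intermediate integrations of the squared importance ratio against $\pi_b$ contributes $\chi_1$; the step $s=l$, which simultaneously contains $(\pi/\pi_b)^2$ and the squared score magnitude, contributes $\chi_2$ via the second $\chi$-condition; and the remaining $l$ earlier steps each contribute $\chi_1$. The per-pair bound $\chi_2\chi_3\chi_1^t$ is independent of $l$, so summing over $l$ (a factor $t+1$) and then over $t$ yields $\tfrac{\chi_2\chi_3}{1-\gamma}\sum_{t\ge 0}(t+1)^2(\gamma\chi_1)^t$, which collapses to $c_1^2$ via the identity $\sum_{t\ge 0}(t+1)^2 x^t=(1+x)/(1-x)^3$ valid for $|x|<1$.

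For Equation~\eqref{eq:constantAssumption2} the calculation is analogous but shorter: I would bound $|Q^{\pi,\overline p}|\le R_{\max}/(1-\gamma)$ (since $\overline p$ is a proper transition kernel and $|r|\le R_{\max}$) and $|\nabla_{\vtheta_j}\log\pi|\le \|\nabla_\vtheta\log\pi\|_q$ (since $\|\cdot\|_\infty\le\|\cdot\|_q$ for every $q\ge 1$). The same Cauchy--Schwarz step reduces the problem to $\E\bigl[\rho_{\pi/\pi_b}(\tau_{0:t})^2\|\nabla_\vtheta\log\pi(a_t|s_t)\|_q^2\bigr]$, which the iterated-expectation procedure bounds by $\chi_2\chi_1^t$ (with $\chi_2$ emerging at the last step $t$ and $\chi_1$ at each of the preceding $t$ steps), and a single geometric series in $\gamma\chi_1$ yields the stated $c_2^2$.

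The main obstacle is the combinatorial bookkeeping of the iterated integration, in particular identifying which trajectory step is responsible for each $\chi_i$ factor and correctly handling the pair of Cauchy--Schwarz steps that combine to produce a $(t+1)^2$ factor rather than the $(t+1)^1$ one would naively expect. It is this $(t+1)^2$ factor that forces the use of $\sum_t(t+1)^2 x^t=(1+x)/(1-x)^3$ in place of a plain geometric sum, and is precisely the origin of the peculiar $(1+\gamma\chi_1)/(1-\gamma\chi_1)^3$ structure of $c_1^2$; the hypothesis $\gamma\chi_1<1$ is exactly the condition under which all the geometric series encountered converge.
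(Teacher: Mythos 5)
Your proposal matches the paper's own proof essentially step for step: the same two Cauchy--Schwarz applications (one on the discounted sum over $t$, one on the cumulative score sum producing the $(t+1)$ factor), the same backward iterated-conditioning argument assigning $\chi_3$ to the terminal transition, $\chi_2$ to the step carrying the score, and $\chi_1$ to the remaining $t$ steps, and the same series identity $\sum_{t\ge 0}(t+1)^2 x^t=(1+x)/(1-x)^3$ for $c_1^2$ and a plain geometric series for $c_2^2$. The argument is correct and requires no further comment.
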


\begin{proof}
Let us start with Equation~\eqref{eq:constantAssumption}. We first apply Cauchy Swartz inequality to bring the expectation inside the summation:
\begin{align*}
	\E_{\tau \sim \zeta^{\pi_b,p}_{\mu} } & \left[ \left( \sum_{t=0}^{+\infty} \gamma^\frac{t}{2} \cdot \gamma^\frac{t}{2}  \rho_{\pi/\pi_b}(\tau_{0:t}) \sum_{l=0}^t \left\| \nabla_{\vtheta} \log \pi (a_l|s_l) \right\|_q \log p(s_{t+1}|s_t,a_t) \right)^2 \right] \\
	& \le \sum_{t=1}^{+\infty} \gamma^t \E_{\tau \sim \zeta^{\pi_b,p}_{\mu} } \left[  \sum_{t=0}^{+\infty} \gamma^t \left( \rho_{\pi/\pi_b}(\tau_{0:t}) \sum_{l=0}^t \left\| \nabla_{\vtheta} \log \pi (a_l|s_l) \right\|_q \log p(s_{t+1}|s_t,a_t) \right)^2 \right] \\
	& \le \frac{1}{1-\gamma}   \sum_{t=0}^{+\infty} \gamma^t \E_{\tau \sim \zeta^{\pi_b,p}_{\mu} } \left[ \left( \rho_{\pi/\pi_b}(\tau_{0:t}) \sum_{l=0}^t \left\| \nabla_{\vtheta} \log \pi (a_l|s_l) \right\|_q \log p(s_{t+1}|s_t,a_t) \right)^2 \right].
\end{align*} 
	Let us fix a timestep $t$. We derive the following bound:
	\begin{align*}
	\E_{\tau \sim \zeta^{\pi_b,p}_{\mu} }  & \left[\left( \rho_{\pi/\pi_b}(\tau_{0:t}) \sum_{l=0}^t \left\| \nabla_{\vtheta} \log \pi (a_l|s_l) \right\|_q \log p(s_{t+1}|s_t,a_t) \right)^2 \right]\\
	&  = \E_{\tau \sim \zeta^{\pi_b,p}_{\mu} }  \left[ \left( \sum_{l=0}^t \rho_{\pi/\pi_b}(\tau_{0:t}) \left\| \nabla_{\vtheta} \log \pi (a_l|s_l) \right\|_q  \right)^2 \left(\log p(s_{t+1}|s_t,a_t) \right)^2 \right] \\
	& \le \E_{\tau \sim \zeta^{\pi_b,p}_{\mu} }  \left[ (t+1) \sum_{l=0}^t \left( \rho_{\pi/\pi_b}(\tau_{0:t}) \left\| \nabla_{\vtheta} \log \pi (a_l|s_l) \right\|_q  \right)^2 \left(\log p(s_{t+1}|s_t,a_t) \right)^2 \right], 
	\end{align*}
	where we applied Cauchy-Swartz inequality to bound the square of the summation. We now rewrite the expectation in a convenient form to highlight the different components. 
	\begin{align*}
	\E_{\tau \sim \zeta^{\pi_b,p}_{\mu} }  & \left[ (t+1) \sum_{l=0}^t \left( \rho_{\pi/\pi_b}(\tau_{0:t}) \left\| \nabla_{\vtheta} \log \pi (a_l|s_l) \right\|_q  \right)^2 \left(\log p(s_{t+1}|s_t,a_t) \right)^2 \right] \\
	& = (t+1) \sum_{l=0}^t \E_{\tau_{0:t} \sim \zeta^{\pi_b,p}_{\mu} }  \left[ \left( \rho_{\pi/\pi_b}(\tau_{0:t}) \left\| \nabla_{\vtheta} \log \pi (a_l|s_l) \right\|_q  \right)^2 \E_{s_{t+1} \sim p(\cdot|s_t,a_t)} \left[ \left(\log p(s_{t+1}|s_t,a_t) \right)^2 \right] \right]\\
	& \le (t+1) \chi_3 \sum_{l=0}^t \E_{\tau_{0:t} \sim \zeta^{\pi_b,p}_{\mu} }  \left[ \left( \rho_{\pi/\pi_b}(\tau_{0:t}) \left\| \nabla_{\vtheta} \log \pi (a_l|s_l) \right\|_q  \right)^2 \right].
	\end{align*}
	Let us fix $l$ and bound the expectation inside the summation, by unrolling the trajectory and recalling the definition of $\rho_{\pi/\pi_b}(\tau_{0:t})$:
	\begin{align*}
	\E_{\tau_{0:t} \sim \zeta^{\pi_b,p}_{\mu} } & \left[ \left( \rho_{\pi/\pi_b}(\tau_{0:t}) \left\| \nabla_{\vtheta} \log \pi (a_l|s_l) \right\|_q  \right)^2 \right] \\
	& = \E_{\substack{s_0 \sim \mu \\a_0 \sim \pi(\cdot|s_0)}} \Bigg[ \left(\frac{\pi(a_0|s_0)}{\pi_b(a_0|s_0)} \right)^2 \E_{\substack{s_1 \sim p(\cdot|s_0,a_0) \\a_1 \sim \pi(\cdot|s_1)}} \Bigg[ \left(\frac{\pi(a_1|s_1)}{\pi_b(a_1|s_1)} \right)^2  \dots \\
	& \quad \times \E_{\substack{s_l \sim p(\cdot|s_{l-1},a_{l-1}) \\a_l \sim \pi(\cdot|s_l)}} \Bigg[ \left(\frac{\pi(a_l|s_l)}{\pi_b(a_l|s_l)} \left\| \nabla_{\vtheta} \log \pi (a_l|s_l) \right\|_q \right)^2  \dots \E_{\substack{s_t \sim p(\cdot|s_{t-1},a_{t-1}) \\a_t \sim \pi(\cdot|s_t)}} \Bigg[ \left(\frac{\pi(a_t|s_t)}{\pi_b(a_t|s_t)} \right)^2  \Bigg] \dots \Bigg] \dots \Bigg] \Bigg]\\
	& \le  \chi_2 \chi_1^{t}.
	\end{align*}
	Plugging this result in the summation we get the result, recalling that $\gamma\chi_1 < 1$ and using the properties of the geometric series, we obtain:
	\begin{equation*}
		\frac{1}{1-\gamma} \sum_{t=0}^{+\infty} (t+1)^2 \gamma^t \chi_1^{t} \chi_2 \chi_3 = \frac{\chi_3 \chi_2(1+\gamma\chi_1)}{(1-\gamma)(1-\gamma\chi_1)^3}.
	\end{equation*}
	We now consider Equation~\eqref{eq:constantAssumption2} and we apply Cauchy Swartz as well:
	\begin{align*}
	\E_{\tau \sim \zeta^{\pi_b,p}_{\mu} } & \left[ \left( \sum_{t=0}^{+\infty} \gamma^{\frac{t}{2}}  \cdot \gamma^{\frac{t}{2}} \rho_{\pi/\pi_b}(\tau_{0:t})  \nabla_{\vtheta_j} \log \pi (a_t|s_t) Q^{\pi,\overline{p}} (s_t,a_t) \right)^2 \right]\\
	&  \le \frac{1}{1-\gamma} \sum_{t=0}^{+\infty} \gamma^{t} \E_{\tau \sim \zeta^{\pi_b,p}_{\mu} } \left[\left(\rho_{\pi/\pi_b}(\tau_{0:t})  \nabla_{\vtheta_j} \log \pi (a_t|s_t) Q^{\pi,\overline{p}} (s_t,a_t) \right)^2 \right].
\end{align*} 
By observing that $\left| Q^{\pi,\overline{p}} (s_t,a_t)\right| \le \frac{R_{\max}}{1-\gamma}$ and $\left|\nabla_{\vtheta_j} \log \pi (a_t|s_t) \right| \le \left\|\nabla_{\vtheta} \log \pi (a_t|s_t) \right\|_{\infty} \le \left\|\nabla_{\vtheta} \log \pi (a_t|s_t) \right\|_{q}$ we can use an argument similar to the one used to bound Equation~\eqref{eq:constantAssumption} to get:
\begin{align*}
	\E_{\tau \sim \zeta^{\pi_b,p}_{\mu} } \left[\left(\rho_{\pi/\pi_b}(\tau_{0:t})  \nabla_{\vtheta_j} \log \pi (a_t|s_t) Q^{\pi,\overline{p}} (s_t,a_t) \right)^2 \right] \le \frac{R_{\max}^2}{(1-\gamma)^2} \chi_2 \chi_1^t.
\end{align*}
Plugging this result into the summation, we have:
\begin{equation*}
\frac{1}{1-\gamma} \sum_{t=0}^{+\infty}  \gamma^t \chi_1^{t} \chi_2 \chi_3 \frac{R_{\max}^2}{(1-\gamma)^2} = \frac{\chi_3 \chi_2 R_{\max}^2}{(1-\gamma)^3(1-\gamma\chi_1)}.
\end{equation*}
\end{proof}

\subsection{Proofs of Section~\ref{sec:theoreticalAnalysis}}
Under Assumption~\ref{ass:boundedMoment}, we prove the following intermediate result about the objective function in Equation~\eqref{eq:objectiveP}.

\begin{lemma}\label{thr:ltP}
Let $\widehat{p} \in \mathcal{P}$ be the maximizer of the objective function in Equation~\eqref{eq:objectiveP}, obtained with $N>0$ independent trajectories $\{\tau^i\}_{i=1}^N$. Under Assumption~\ref{ass:boundedMoment} and~\ref{ass:boundedPdim}, for any $\delta \in (0,1)$, with probability at least $1-2\delta$ it holds that:
	\begin{equation}
		\E_{\tau \sim \zeta_{\mu}^{\pi_b,p}} \left[ l^{\pi,\widehat{p}}(\tau)\right] \ge \sup_{\overline{p} \in \mathcal{P}} \E_{\tau \sim \zeta_{\mu}^{\pi_b,p}} \left[l^{\pi,\overline{p}}(\tau) \right] - 4 c_1 \epsilon,
	\end{equation}
	where $\epsilon = \sqrt{\frac{v \log \frac{2eN}{v} + \log \frac{4}{\delta}}{N}} \Gamma \left(\sqrt{\frac{v \log \frac{2eN}{v} + \log \frac{4}{\delta}}{N}} \right)$ and $\Gamma(\xi) \coloneqq \frac{1}{2} + \sqrt{ 1 + \frac{1}{2} \log \frac{1}{\xi} }  = \widetilde{\mathcal{O}}(1)$.
\end{lemma}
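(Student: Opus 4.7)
The plan is to recognize the lemma as a standard excess-risk bound for empirical risk maximization applied to the (potentially unbounded) loss family $\mathcal{L} = \{ l^{\pi,\overline{p}} : \overline{p} \in \mathcal{P} \}$. Writing $L(\overline{p}) = \E_{\tau \sim \zeta_\mu^{\pi_b,p}}[l^{\pi,\overline{p}}(\tau)]$ and $\widehat{L}(\overline{p}) = \frac{1}{N} \sum_{i=1}^N l^{\pi,\overline{p}}(\tau^i)$, one notes that the maximizer of the objective in Equation~\eqref{eq:objectiveP} is exactly the empirical maximizer $\widehat{p} \in \argmax_{\overline{p} \in \mathcal{P}} \widehat{L}(\overline{p})$, since the weights $\omega_t^i$ do not depend on $\overline{p}$ and thus telescope into the definition of $l^{\pi,\overline{p}}(\tau^i)$. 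Pick any $\overline{p}^* \in \mathcal{P}$ (in particular one that approximately attains the supremum of $L$) and use the standard three-term decomposition
\begin{equation*}
L(\overline{p}^*) - L(\widehat{p}) = \bigl[L(\overline{p}^*) - \widehat{L}(\overline{p}^*)\bigr] + \bigl[\widehat{L}(\overline{p}^*) - \widehat{L}(\widehat{p})\bigr] + \bigl[\widehat{L}(\widehat{p}) - L(\widehat{p})\bigr].
\end{equation*}
The middle bracket is nonpositive by definition of $\widehat{p}$, so it suffices to control $\sup_{\overline{p} \in \mathcal{P}} \lvert L(\overline{p}) - \widehat{L}(\overline{p}) \rvert$.

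Since Assumption~\ref{ass:boundedMoment} only bounds second moments (the losses themselves can be unbounded through the importance weights and score magnitudes), the natural tool is the relative deviation bound of~\citet{cortes2013relative}. Their one-sided bound gives, for every $\epsilon' > 0$,
\begin{equation*}
\Pr\!\left[ \sup_{f \in \mathcal{L}} \frac{L(f) - \widehat{L}(f)}{\sqrt{\E[f^2]}} > \epsilon' \right] \;\le\; 4\, \Pi_{\mathcal{L}}(2N)\, \exp\!\left(-\tfrac{N \epsilon'^2}{4}\right),
\end{equation*}
together with the symmetric one-sided bound in the reverse direction (where $\sqrt{\E[f^2]}$ is replaced by $\sqrt{\widehat{\E}[f^2]}$, or vice versa, depending on the form used). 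By Sauer's lemma applied to the pseudo-dimension $v$ from Assumption~\ref{ass:boundedPdim}, the growth function obeys $\Pi_\mathcal{L}(2N) \le (2eN/v)^v$. Inverting the tail bound by setting the right-hand side equal to $\delta$ and solving produces an $\epsilon'$ of the form $\sqrt{(v\log(2eN/v) + \log(4/\delta))/N}$ up to the $\Gamma(\cdot)$ correction, which arises precisely from the logarithmic factors absorbed by \citet{cortes2013relative}'s argument. Plugging in the uniform second moment bounds $\E[f^2], \widehat{\E}[f^2] \le c_1^2$ from Assumption~\ref{ass:boundedMoment} then yields
\begin{equation*}
\sup_{\overline{p} \in \mathcal{P}} \bigl(L(\overline{p}) - \widehat{L}(\overline{p})\bigr) \le 2 c_1 \epsilon, \qquad \sup_{\overline{p} \in \mathcal{P}} \bigl(\widehat{L}(\overline{p}) - L(\overline{p})\bigr) \le 2 c_1 \epsilon,
\end{equation*}
each with probability at least $1-\delta$, so that both inequalities hold simultaneously with probability at least $1-2\delta$ by a union bound.

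Combining the decomposition with these two uniform inequalities kills the middle term and bounds the first and third terms by $2 c_1 \epsilon$ each, giving $L(\overline{p}^*) - L(\widehat{p}) \le 4 c_1 \epsilon$, which is the claim after taking the supremum over $\overline{p}^* \in \mathcal{P}$. The main technical obstacle is not the decomposition itself but matching the precise constants and the $\Gamma$ factor to the statement: this requires using the Cortes--Greenberg--Mohri bound rather than standard Rademacher or Talagrand bounds, because the loss family has unbounded range, and keeping track of whether the second moment on the right-hand side is the population or empirical one (both variants are needed, one per direction). A secondary care point is verifying that the empirical maximizer is well-defined up to arbitrarily small slack; should an exact maximizer fail to exist, the argument goes through with a standard $o(1/\sqrt{N})$ slack term absorbed into the constant.
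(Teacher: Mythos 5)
Your proposal is correct and follows essentially the same route as the paper: the same empirical-risk-maximization decomposition (the paper's $\pm\widehat{\mathcal{L}}^{\pi,\widehat{p}}$ trick, with the middle term killed because $\widehat{p}$ maximizes the empirical objective), followed by a uniform deviation bound from the relative-deviation inequalities of \citet{cortes2013relative} with the growth function controlled via the pseudo-dimension $v$ and the second-moment bounds $c_1^2$ from Assumption~\ref{ass:boundedMoment}, yielding $2c_1\epsilon$ per direction and $4c_1\epsilon$ total with probability $1-2\delta$. Your accounting of the constants and of the $\Gamma$ factor matches the paper's use of Corollaries 14 and 18 of that reference.
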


\begin{proof}
	We use a very common argument of empirical risk minimization. Let us denote with $\widetilde{p} \in \argmax_{\overline{p} \in \mathcal{P}} \E_{\tau \sim \zeta_{\mu}^{\pi_b,p}} \left[l^{\pi,\overline{p}}(\tau) \right]$ and $\widehat{\mathcal{L}}^{\pi,\overline{p}} = \frac{1}{N} \sum_{i=1}^N l^{\pi,\overline{p}}(\tau^i)$:
	\begin{align*}
		\E_{\tau \sim \zeta_{\mu}^{\pi_b,p}} \left[ l^{\pi,\widehat{p}}(\tau)\right] - \E_{\tau \sim \zeta_{\mu}^{\pi_b,p}} \left[l^{\pi,\widetilde{p}}(\tau) \right] & = \E_{\tau \sim \zeta_{\mu}^{\pi_b,p}} \left[ l^{\pi,\widehat{p}}(\tau)\right] - \E_{\tau \sim \zeta_{\mu}^{\pi_b,p}} \left[l^{\pi,\widetilde{p}}(\tau) \right] \pm \widehat{\mathcal{L}}^{\pi,\widehat{p}} \\
		& \ge \E_{\tau \sim \zeta_{\mu}^{\pi_b,p}} \left[ l^{\pi,\widehat{p}}(\tau)\right] - \widehat{\mathcal{L}}^{\pi,\widehat{p}}  -  \E_{\tau \sim \zeta_{\mu}^{\pi_b,p}} \left[l^{\pi,\widetilde{p}}(\tau) \right] + \widehat{\mathcal{L}}^{\pi,\widetilde{p}}\\
		& \ge - 2 \sup_{\overline{p} \in \mathcal{P}} \left| \widehat{\mathcal{L}}^{\pi,\overline{p}} - \E_{\tau \sim \zeta_{\mu}^{\pi_b,p}} \left[ l^{\pi,\overline{p}}(\tau)\right] \right|,
	\end{align*}
	where we exploited the fact that $\widehat{\mathcal{L}}^{\pi,\widetilde{p}} \le \widehat{\mathcal{L}}^{\pi,\widehat{p}}$, as $\widehat{p}$ is the maximizer of $\widehat{\mathcal{L}}^{\pi,\cdot}$. The result follows from the application of Corollary 14 in~\cite{cortes2013relative}, having bounded the growth function with the pseudodimension, as in Corollary 18 of~\cite{cortes2013relative}.
\end{proof}

We can derive a concentration result for the gradient estimation (Equation~\eqref{eq:gradient_estimate}), recalling the fact that $\mathbr{g}^{\pi,\overline{p}}$ is a vectorial function.
\begin{lemma}\label{thr:ltG}
Let $q \in [1,+\infty]$, $d$ be the dimensionality of $\Theta$ and $\widehat{p} \in \mathcal{P}$ be the maximizer of the objective function in Equation~\eqref{eq:objectiveP}, obtained with $N>0$ independent trajectories $\{\tau^i\}_{i=1}^N$. Under Assumption~\ref{ass:boundedMoment} and~\ref{ass:boundedPdim}, for any $\delta \in (0,1)$, with probability at least $1-2d\delta$, simultaneously for all $\overline{p} \in \mathcal{P}$, it holds that:
	\begin{equation}
		\left\| \widehat{\nabla}_{\vtheta} J(\vtheta) - \nabla_{\vtheta}^{\mathrm{MVG}} J(\vtheta) \right\|_q \le
		 2 d^{\frac{1}{q}} R_{\max} c_2  \epsilon,
	\end{equation}
	where $\epsilon = \sqrt{\frac{v \log \frac{2eN}{v} + \log \frac{4}{\delta}}{N}} \Gamma \left(\sqrt{\frac{v \log \frac{2eN}{v} + \log \frac{4}{\delta}}{N}} \right)$ and $\Gamma(\xi) \coloneqq \frac{1}{2} + \sqrt{ 1 + \frac{1}{2} \log \frac{1}{\xi} }  = \widetilde{\mathcal{O}}(1)$.
\end{lemma}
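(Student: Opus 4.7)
The plan is to cast this as a uniform vector-valued empirical process bound, reduced coordinate-by-coordinate to the same scalar tool used for Lemma~\ref{thr:ltP}. First I would rewrite both objects in terms of the trajectory-level function $\mathbr{g}^{\pi,\overline{p}}$ defined just before Assumption~\ref{ass:boundedMoment}. The estimator of Equation~\eqref{eq:gradient_estimate} is, by inspection, the empirical average
$$\widehat{\nabla}_\vtheta J(\vtheta) = \frac{1}{N}\sum_{i=1}^N \mathbr{g}^{\pi,\widehat{p}}(\tau^i),$$
while, applying Lemma~\ref{thr:lemmaTraj} to the outer integral against $\delta_\mu^{\pi,p}$ in the definition of $\nabla^{\mathrm{MVG}}_\vtheta J$ and then changing measure from $\pi$ to $\pi_b$ via importance sampling exactly as in the proof of Lemma~\ref{thr:lemmaObjTraj}, the MVG can be written as the expectation
$$\nabla^{\mathrm{MVG}}_\vtheta J(\vtheta) = \E_{\tau \sim \zeta_\mu^{\pi_b,p}}\!\left[\mathbr{g}^{\pi,\widehat{p}}(\tau)\right].$$
Hence the quantity to control is the deviation of an empirical mean from its expectation, for the vector-valued hypothesis class indexed by $\overline{p}\in \mathcal{P}$.

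Next, for each coordinate $j\in\{1,\dots,d\}$, I would invoke Corollary 14 of \citet{cortes2013relative} (the relative-deviation bound) on the scalar class $\mathcal{G}_j = \{\tau \mapsto [\mathbr{g}^{\pi,\overline{p}}(\tau)]_j : \overline{p} \in \mathcal{P}\}$, combined with the pseudodimension-to-growth-function bound (their Corollary 18), exactly as was done for Lemma~\ref{thr:ltP}. The two hypotheses needed by the tool are supplied directly by the assumptions: Assumption~\ref{ass:boundedPdim} gives pseudodimension at most $v$ for $\mathcal{G}_j$ (a coordinate projection cannot shatter a set the vector class does not shatter), and the $L^\infty$-norm form of Assumption~\ref{ass:boundedMoment} bounds both the population second moment $\E_{\tau \sim \zeta_\mu^{\pi_b,p}}[([\mathbr{g}^{\pi,\overline{p}}(\tau)]_j)^2]$ and its empirical counterpart by $R_{\max}^2 c_2^2$, uniformly over $\overline{p}\in\mathcal{P}$ and $\pi\in\Pi_\Theta$. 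This yields, with probability at least $1-2\delta$ and simultaneously over $\overline{p}\in\mathcal{P}$ (in particular for the data-dependent maximizer $\widehat{p}$),
$$\left| [\widehat{\nabla}_\vtheta J(\vtheta)]_j - [\nabla^{\mathrm{MVG}}_\vtheta J(\vtheta)]_j \right| \;\le\; 2 R_{\max} c_2\, \epsilon,$$
with $\epsilon$ as stated. Union-bounding over the $d$ coordinates produces an event of probability at least $1-2d\delta$ on which this inequality holds for all $j$, and converting the resulting $\ell^\infty$ control to an $\ell^q$ control via $\|x\|_q \le d^{1/q}\|x\|_\infty$ (with $d^{1/\infty}=1$) produces precisely the claimed factor $2 d^{1/q} R_{\max} c_2\,\epsilon$.

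The only technical obstacle is matching the vector-valued setup to a tool formulated for scalar function classes. Concretely, I would check two points: (i) the $L^\infty$ second-moment control in Assumption~\ref{ass:boundedMoment} implies the per-coordinate second-moment control required by Corollary 14 of \citet{cortes2013relative} (which it does, since $\|v\|_\infty^2 \ge v_j^2$ for every $j$, applied to the vector of second moments), and (ii) the pseudodimension of the coordinate class is bounded by that of the vector class. Once those are in place, the rest is a transcription of the argument behind Lemma~\ref{thr:ltP}, with the factor $d^{1/q}$ and the $d$-fold probability inflation being the only new ingredients.
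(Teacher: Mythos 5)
Your proposal is correct and follows essentially the same route as the paper's proof: it identifies $\widehat{\nabla}_\vtheta J(\vtheta)$ as the empirical mean of $\mathbr{g}^{\pi,\widehat{p}}$ and $\nabla^{\mathrm{MVG}}_\vtheta J(\vtheta)$ as its expectation under $\zeta_\mu^{\pi_b,p}$, applies Corollary 14 (with Corollary 18) of \citet{cortes2013relative} coordinate-wise under Assumptions~\ref{ass:boundedMoment} and~\ref{ass:boundedPdim}, union-bounds over the $d$ coordinates to get probability $1-2d\delta$, and converts the per-coordinate bound into the $L^q$ bound with the $d^{1/q}$ factor. Your explicit verification that the MVG equals the trajectory-level expectation (via Lemma~\ref{thr:lemmaTraj} and importance sampling) and that the coordinate classes inherit the second-moment and pseudodimension controls only makes explicit what the paper leaves implicit in its "sample version" remark.
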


\begin{proof}
We observe that $\widehat{\nabla}_{\vtheta} J(\vtheta)$ is the sample version of $\nabla_{\vtheta}^{\mathrm{MVG}} J(\vtheta)$. Under Assumption~\ref{ass:boundedMoment} and~\ref{ass:boundedPdim}, and using Corollary 14 in~\cite{cortes2013relative} as in Lemma~\ref{thr:ltP}, we can write for any $j=1,...d$ the following bound that holds with probability at least $1-2\delta$, simultaneously for all $\widehat{p} \in \mathcal{P}$:
	\begin{equation}
		\left| \widehat{\nabla}_{{\vtheta}_j} J(\vtheta) - \nabla_{{\vtheta}_j}^{\mathrm{MVG}} J(\vtheta) \right| \le
		 2 R_{\max} c_2  \epsilon.
	\end{equation}
	Considering the $L^q$-norm, and plugging the previous equation, we have that with probability at least $1-2d\delta$ it holds that, simultaneously for all $\widehat{p} \in \mathcal{P}$:
	\begin{align*}
	\left\| \widehat{\nabla}_{\vtheta} J(\vtheta) - \nabla_{\vtheta}^{\mathrm{MVG}} J(\vtheta) \right\|_q & = \left( \sum_{j=1}^d \left| \nabla_{{\vtheta}_j}^{\mathrm{MVG}} J(\vtheta) - \nabla_{{\vtheta}_j} J(\vtheta)\right|^q \right)^{\frac{1}{q}} \le 2 d^{\frac{1}{q}} R_{\max} c_2  \epsilon,
	\end{align*}
	having exploited a union bound over the dimensions $d$.
\end{proof}

%

We are now ready to prove the main result.
\mainTheorem*
\begin{proof}
	Let us first consider the decomposition, that follows from triangular inequality:
	\begin{align*}
		\left\| \widehat{\nabla}_{\vtheta} J(\vtheta) - \nabla_{\vtheta} J(\vtheta) \right\|_q & = \left\| \widehat{\nabla}_{\vtheta} J(\vtheta) - \nabla_{\vtheta} J(\vtheta) \pm \nabla_{\vtheta}^{\mathrm{MVG}} J(\vtheta) \right\|_q \\
		& \le \underbracket{\left\| \widehat{\nabla}_{\vtheta} J(\vtheta) - \nabla_{\vtheta}^{\mathrm{MVG}} J(\vtheta) \right\|_q}_{\text{(i)}} + \underbracket{\left\| \nabla_{\vtheta}^{\mathrm{MVG}} J(\vtheta) - \nabla_{\vtheta} J(\vtheta)\right\|_q}_{\text{(ii)}}.
	\end{align*}
	We now bound each term of the right hand side. (i) is bounded in Lemma~\ref{thr:ltG}. Let us now consider (ii). We just need to apply Theorem~\ref{thr:lemmaObjTraj} and Lemma~\ref{thr:ltP}, recalling the properties of the KL-divergence. From Theorem~\ref{th:weighting}:
	\begin{align}
		 \Big\| & {\nabla}_{\vtheta}^{\text{MVG}} J(\vtheta) - \nabla_{\vtheta} J(\vtheta) \Big\|_q  \leq  \frac{\gamma \sqrt{2} Z R_{\max}}{(1-\gamma)^2} \sqrt{\E_{s,a \sim \eta^{\pi,p}_{\mu}} \left[ D_{KL}(p(\cdot|s,a) \| \widehat{p} (\cdot|s,a)) \right] } \notag\\
		& = \frac{\gamma \sqrt{2} Z R_{\max}}{(1-\gamma)^2} \sqrt{\E_{s,a \sim \eta^{\pi,p}_{\mu}} \left[ \ints p(s'|s,a) \log p(s'|s,a) \d s' - \ints  p(s'|s,a) \log \widehat{p}(s'|s,a) \d s' \right] }\label{p:1}\\
		& = \frac{\gamma \sqrt{2Z} R_{\max}}{(1-\gamma)} \sqrt{\E_{\tau \sim \zeta_\mu^{\pi_b,p}}\left[\sum_{t=0}^{+\infty} \omega_t \left(  \log p(s_{t+1}|s_t,a_t)  - \log \widehat{p}(s_{t+1}|s_t,a_t) \right) \right]}\label{p:2} \\
		& = \frac{\gamma \sqrt{2Z} R_{\max}}{(1-\gamma)} \sqrt{\E_{\tau \sim \zeta_\mu^{\pi_b,p}}\left[  \sum_{t=0}^{+\infty}  \omega_t \log p(s_{t+1}|s_t,a_t) \right] -  \E_{\tau \sim \zeta_\mu^{\pi_b,p}}\left[l^{\pi,\widehat{p}}(\tau) \right] } \notag\\
		& \le \frac{\gamma \sqrt{2Z} R_{\max}}{(1-\gamma)} \sqrt{\E_{\tau \sim \zeta_\mu^{\pi_b,p}}\left[  \sum_{t=0}^{+\infty}  \omega_t \log p(s_{t+1}|s_t,a_t) \right] -  \sup_{\overline{p} \in \mathcal{P}} \E_{\tau \sim \zeta_{\mu}^{\pi_b,p}} \left[l^{\pi,\overline{p}}(\tau) \right] + 4 c_1 \epsilon }\label{p:3}\\
		& = \frac{\gamma \sqrt{2Z} R_{\max}}{(1-\gamma)} \sqrt{\inf_{\overline{p} \in \mathcal{P}} \E_{\tau \sim \zeta_\mu^{\pi_b,p}}\left[  \sum_{t=0}^{+\infty}  \omega_t \left(  \log p(s_{t+1}|s_t,a_t)  - \log \overline{p}(s_{t+1}|s_t,a_t) \right) \right] + 4 c_1 \epsilon }\\
		& \le \frac{\gamma \sqrt{2Z} R_{\max}}{(1-\gamma)} \sqrt{\inf_{\overline{p} \in \mathcal{P}} \E_{\tau \sim \zeta_\mu^{\pi_b,p}}\left[  \sum_{t=0}^{+\infty}  \omega_t \left(  \log p(s_{t+1}|s_t,a_t)  - \log \overline{p}(s_{t+1}|s_t,a_t) \right) \right]} +  \frac{2 \gamma   R_{\max} \sqrt{2 Z c_1 \epsilon}}{1 - \gamma} \label{p:34}\\
		& = \frac{\gamma \sqrt{2} Z R_{\max}}{(1-\gamma)^2} \sqrt{\inf_{\overline{p} \in \mathcal{P}} \E_{s,a \sim \eta^{\pi,p}_{\mu}} \left[ D_{KL}(p(\cdot|s,a) \| \overline{p} (\cdot|s,a)) \right]} + \frac{2 \gamma   R_{\max} \sqrt{2 Z c_1 \epsilon}}{1 - \gamma} ,\label{p:4}
	\end{align}
	where Equation~\eqref{p:1} and Equation~\eqref{p:4} follow from the definition of KL-divergence and Lemma~\ref{thr:lemmaObjTraj}. Equation~\eqref{p:2} is derived from Lemma~\ref{thr:lemmaObjTraj} where $\omega_t = \gamma^t \rho_{\pi/\pi_b}(\tau_{0:t}) \sum_{l=0}^{t} \left\| \nabla_\vtheta \log \pi(a_l|s_l) \right\|_q$. Equation~\eqref{p:3} is obtained by applying Lemma~\ref{thr:ltP}. Equation~\eqref{p:34} follows from the subadditivity of the square root.
	Putting together (i) and (ii) we get the result that holds with probability at least $1-2(d+1)\delta$ as bound (i) holds w.p. $1-2\delta$ and bound (ii) w.p. $1-2d\delta$. By rescaling $\delta$ we get the result.
\end{proof}

\section{Experimental details}
In this appendix, we report an extensive explanation of the domains employed in the experimental evaluation along with some details on the policy and approximating transition models employed.

\subsection{Two-areas Gridworld}
The gridworld we use in our experiments features two subspaces of the state space $\mathcal{S}$, to which we refer to as  $\mathcal{S}_1$ (\emph{lower}) and $\mathcal{S}_2$ (\emph{upper}).

The agent can choose among four different actions: in the lower part, a sticky area, each action corresponds to an attempt to go up, right, down or left, and has a $0.9$ probability of success and a $0.1$ probability of causing the agent to remain in the same state; in the upper part, the four actions have deterministic movement effects, all different from the ones they have in the other area (rotated of 90 degrees). Representing as $(\underset{\Uparrow}{p_1}, \underset{\Rightarrow}{p_2}, \underset{\Downarrow}{p_3}, \underset{\Leftarrow}{p_4}, \underset{	\nLeftrightarrow}{p_5})$ the probabilities $p_1,p_2,p_3,p_4$ and $p_5$ of, respectively, going up, right, down, left and remaining in the same state, the transition model of the environment is defined as follows:

\begin{align*}
  &s \in \mathcal{S}_1: p(\cdot|s,a)=\left\{
  \begin{array}{@{}ll@{}}
(\underset{\Uparrow}{0}, \underset{\Rightarrow}{0.9}, \underset{\Downarrow}{0}, \underset{\Leftarrow}{0}, \underset{	\nLeftrightarrow}{0.1}), & \text{if}\ a=0 \\[15pt]
    (\underset{\Uparrow}{0}, \underset{\Rightarrow}{0}, \underset{\Downarrow}{0.9}, \underset{\Leftarrow}{0}, \underset{	\nLeftrightarrow}{0.1}), & \text{if}\ a=1 \\[15pt]
    (\underset{\Uparrow}{0}, \underset{\Rightarrow}{0}, \underset{\Downarrow}{0}, \underset{\Leftarrow}{0.9}, \underset{	\nLeftrightarrow}{0.1}), & \text{if}\ a=2 \\[15pt]
    (\underset{\Uparrow}{0.9}, \underset{\Rightarrow}{0}, \underset{\Downarrow}{0}, \underset{\Leftarrow}{0}, \underset{	\nLeftrightarrow}{0.1}), & \text{if}\ a=3
  \end{array}\right.,\\
  \\
  &s \in \mathcal{S}_2: p(\cdot|s,a)=\left\{
  \begin{array}{@{}ll@{}}
    (\underset{\Uparrow}{1}, \underset{\Rightarrow}{0}, \underset{\Downarrow}{0}, \underset{\Leftarrow}{0}, \underset{	\nLeftrightarrow}{0}), & \text{if}\ a=0 \\[15pt]
    (\underset{\Uparrow}{0}, \underset{\Rightarrow}{1}, \underset{\Downarrow}{0}, \underset{\Leftarrow}{0}, \underset{	\nLeftrightarrow}{0}), & \text{if}\ a=1 \\[15pt]
    (\underset{\Uparrow}{0}, \underset{\Rightarrow}{0}, \underset{\Downarrow}{1}, \underset{\Leftarrow}{0}, \underset{	\nLeftrightarrow}{0}), & \text{if}\ a=2 \\[15pt]
    (\underset{\Uparrow}{0}, \underset{\Rightarrow}{0}, \underset{\Downarrow}{0}, \underset{\Leftarrow}{1}, \underset{	\nLeftrightarrow}{0}), & \text{if}\ a=3
  \end{array}\right..
\end{align*} 

There is a reward of -1 in all states apart a single absorbing goal state, located on the upper left corner, that yields zero reward.
The initial state is uniformly chosen among the ones on the low and right border and the agent cannot go back to the sticky part once it reached the second area, in which it passes through the walls to get to the other side.

As policy class $\Pi_\Theta$, we use policies linear in the one-hot representation of the current state. The policy outputs a Boltzman probability distribution over the four possible actions. 
In the lower part of the environment, we initialize the policy as deterministic: the agent tries to go up as long as it can, and goes left when a wall is encountered. Being the policy deterministic for these actions, the corresponding score is zero.

As model class $\mathcal{P}$, we employ the one in which each $\widehat{p} \in \mathcal{P}$ is such that $\widehat{p}(m|s,a) = \text{softmax}(\mathds{1}_a^T \mathbf{W})$, where $\mathbf{W}$ is a matrix of learnable parameters, $\mathds{1}_a$ is the one-hot representation of the action and $m \in  \left\{ \Uparrow, \Rightarrow, \Downarrow, \Leftarrow, \nLeftrightarrow \right\}$ is a movement effect.
This model class has very little expressive power: the forward model is, in practice, executing a probabilistic lookup using the current actions, trying to guess what the next state is. 

We learn both the policy and the models by minimizing the corresponding loss function via gradient descent. We use the Adam optimizer \cite{adam} with a learning rate of 0.2 for the former and of 0.01 for the latter, together with $\beta_1=0.9$ and $\beta_2=0.999$. These hypeparameters were chosen by trial and error from a range of $(0.001, 0.9)$.

In order to understand the properties of our method for model learning, we compare the maximum likelihood model (ML) and the one obtained with GAMPS, in terms of accuracy in next state prediction and MSE with the real Q-function w.r.t. to the one derived by dynamic programming; lastly, we use the computed action-value functions to provide two approximations to the sample version of Equation~\ref{eq:gradient_approximation}. The intuitive rationale behind decision-aware model learning is that the raw quality of the estimate of the forward model itself or any intermediate quantity is pointless: the accuracy on estimating the quantity of interest for improving the policy, in our case its gradient, is the only relevant metric. The results, shown in Table~\ref{tab:estimation_results}, illustrate exactly this point, showing that, although our method offers worse performance in model and Q-function estimation, it is able to perfectly estimate the correct direction of the policy gradient. The definitions of the metrics used for making the comparison, computed over an hold-out set of 1000 validation trajectories, are now presented.
The model accuracy for an estimated model $\hat{p}$ is defined as $\text{acc}(\widehat{p}) \in \frac{1}{|\mathcal{D}|} \sum_{(s,a,s') \in \mathcal{D}} \mathds{1}(s' = \argmax_{\overline{s}} \widehat{p}(\overline{s}|s,a)) $.
The MSE for measuring the error in estimating the tabular Q-function is computed by averaging the error obtained for every state and action.
Lastly, the cosine similarity between the real gradient $\nabla_\vtheta J(\vtheta)$ and the estimated gradient $\widehat{\nabla}_\vtheta J(\vtheta)$ is defined as $\text{sim}(\nabla_\vtheta J(\vtheta), \widehat{\nabla}_\vtheta J(\vtheta))= \frac{\nabla_\vtheta J(\vtheta) \cdot \widehat{\nabla}_\vtheta J(\vtheta)}{\max(\|\nabla_\vtheta J(\vtheta)\|_2 \cdot \|  \widehat{\nabla}_\vtheta J(\vtheta)\|_2, \epsilon)}$, where $\epsilon$ is set to $10^{-8}$.

\begin{table}[t]
\label{tab:estimation_results}
\small
\centering
\caption{Estimation performance on the gridworld environment comparing Maximum Likelihood estimation (ML) and our approach (GAMPS). 1000 training and 1000 validation trajectories per run. Average results on 10 runs with a 95\% confidence interval.}
\begin{tabular}[t]{cccc}
\toprule
\label{tab:estimation_results}
Approach &$\widehat{p}$ accuracy&$\widehat{Q}$ MSE & $\widehat {\nabla}_\vtheta J$ cosine similarity\\
\midrule
ML & $0.765 \pm 0.001$ & $11.803 \pm 0.158$ & $0.449 \pm 0.041$ \\
GAMPS & $0.357 \pm 0.004$ & $633.835 \pm 12.697$ & $1.000 \pm 0.000$ \\
\bottomrule
\end{tabular}
\end{table}

\begin{figure}[t]
  \begin{center}
    \includegraphics[width=0.4\textwidth]{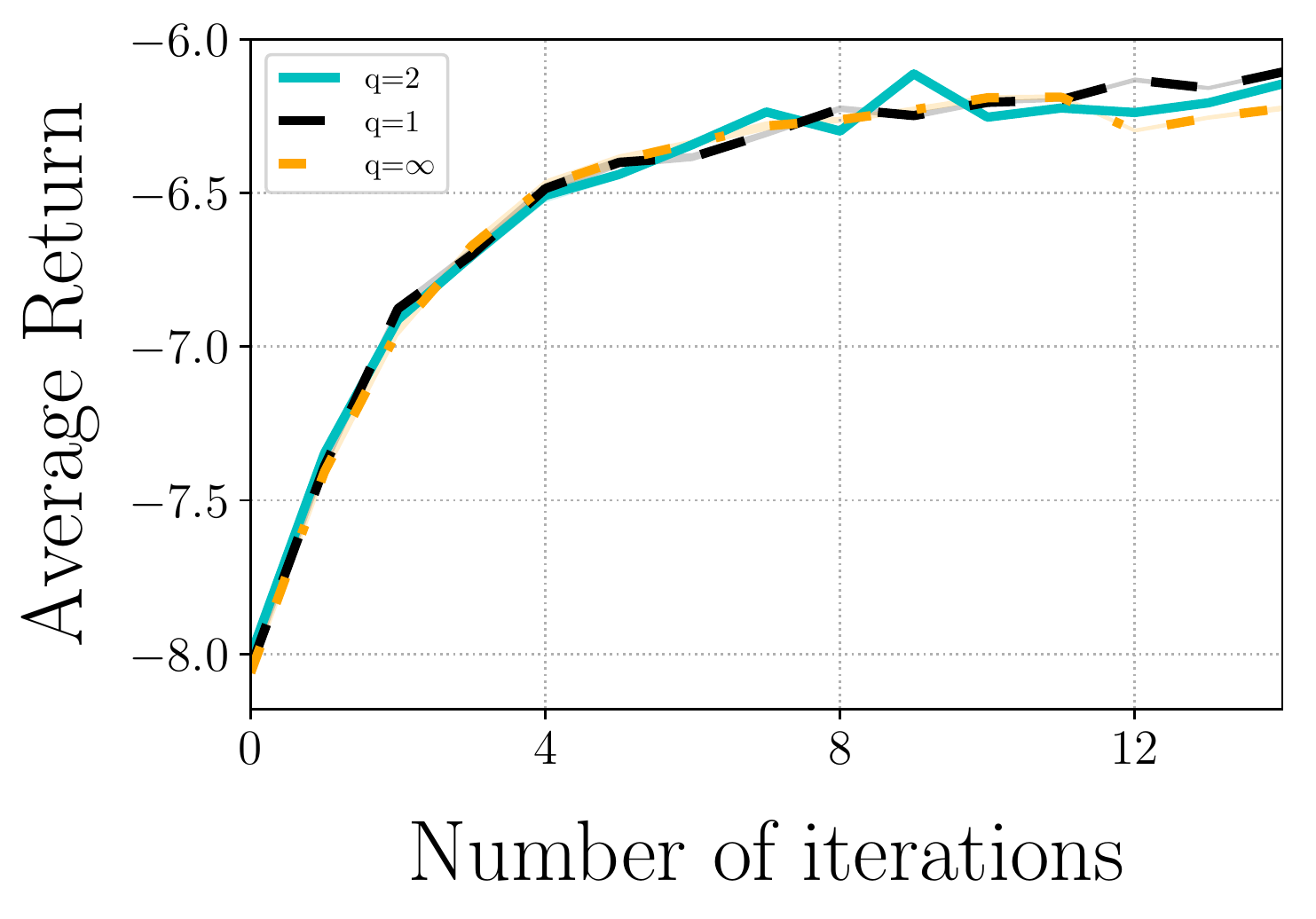}
  \end{center}
  \caption{Results on the gridworld for different values of $q$ for $\| \nabla_\vtheta \log \pi(a|s)\|_q$, using 50 trajectories (10 runs, mean $\pm$ std).}
  \label{fig:qnorm}
\end{figure}
In the computation of our gradient-aware weights for all the experiments, we use $\| \nabla_\vtheta \log \pi(a|s)\|_q$ with $q=2$.
We validated our choice by running 15 iterations of GAMPS using 50 randomly collected trajectories and $q \in \left\{ 1, 2, \infty \right\}$.
As shown in Figure~\ref{fig:qnorm}, we did not find the choice of $q$ to be crucial for the performance of our algorithm.

\subsection{Minigolf} \label{app:minigolf}
We adopt the version of this domain proposed by \citep{tirinzoni2019transfer}. In the following, we report a brief description of the problem.

In the minigolf game, the agent has to shoot a ball with radius $r$ inside a hole of diameter $D$ with the minimum number of strokes. We assume that the ball moves along a level surface with a constant deceleration $d=\frac{5}{7}\rho g$, where $\rho$ is the dynamic friction coefficient between the ball and the ground and $g$ is the gravitational acceleration.
Given the distance $x_0$ of the ball from the hole, the agent must determine the angular velocity $\omega$ of the putter that determines the initial velocity $v_0 = \omega l$ (where $l$ is the length of the putter) to put the ball in the hole in one strike. For each distance $x_0$, the ball falls in the hole if its initial velocity $v_0$ ranges from $v_{min} = \sqrt{2dx_0}$ to $v_{max}=\sqrt{(2D-r)^2\frac{g}{2r}+v_{min}^2}$. $v_{max}$ is the maximum allowed speed of the edge of the hole to let the ball enter the hole and not to overcome it. 
At the beginning of each trial the ball is placed at random, between $2000$ cm and $0$ cm far from the hole. 
At each step, the agent chooses an action that determines the initial velocity $v_0$ of the ball. When the ball enters the hole the episode ends with reward $0$. If $v_0 > v_{max}$ the ball is lost and the episode ends with reward $-100$. Finally, if $v_0 < v_{min}$ the episode goes on and the agent can try another hit with reward $-1$ from position $x = x_0 - \frac{(v_0)^2}{2d}$. 
The angular speed of the putter is determined by the action $a$ selected by the agent as follows: $\omega = a l(1 + \epsilon)$, where $\epsilon\sim\mathcal{N}(0,0.3)$.
This implies that the stronger the action chosen the more uncertain its outcome will be. As a result, the agent is disencumbered by trying to make a hole in one shot when it is away from the hole and will prefer to perform a sequence of approach shots.
The state space is divided into two parts: the first one, bigger twice the other, is the nearest to the hole and features $\rho_1=0.131$; the second one is smaller and has an higher friction with $\rho_1=0.19$.

We use a linear-Gaussian policy that is linear on six equally-spaced radial basis function features. Four of the basis functions are therefore in the first area, while two are in the other one. The parameters of the policy are initialized equal to one for the mean and equal to zero for the standard deviation.

As a model class, we use parameterized linear-Gaussian models that predict the next state by difference with respect to the previous one. We avoid the predictions of states that are to the right with respect to the current state by using a rectifier function. The overall prediction of the next state by the model is given by $\widehat{s}_{t+1} = s_t - \max(0, \epsilon), \epsilon \sim \mathcal{N}(V_\mu[s_t,a_t],V_\sigma[s_t,a_t])$, where $V_\mu$ and $V_\sigma$ are two learnable parameters.

For all the learning algorithms, we employ a constant learning rate of $0.08$ for the Adam optimizer, with $\beta_1=0$ and $\beta_2=0.999$. For training the model used by GAMPS and ML, we minimize the MSE weighted through our weighting scheme, again using Adam with learning rate $0.02$ and default betas. For the estimation of the Q-function, we use the on-the-fly procedure outlined in Section \ref{sec:ComputingQ}, with an horizon of 20 and averaging over 10 rollouts. Also in this experiment we set $q=2$ for the q-norm $\| \nabla_\vtheta \log \pi(a|s)\|_q$ of the score. We use $\gamma=0.99$.

\subsection{Swimmer}
In the swimmer task, a 3-link swimming robot is able to move inside a viscous fluid. The goal is to control the two joints of the robot in order to make it swim forward.
The fully-observable state space consists of various positions and velocities, for a total of 8 scalars.
The reward function is a linear combination of a \textit{forward term}, determined by how much the action of the agent allowed it to move forward, and a \textit{control term}, consisting of the norm of its actions.
The two rewards are combined by means of a control coefficient $\alpha_{\mathrm{ctrl}}=0.0001$.

For running GAMPS on this task, we chose to make use of more powerful model classes, in order to show that gradient-aware model learning can be effective even in the case of high-capacity regimes.
Thus, we use 2-layer neural networks with 32 hidden units, that take current states and actions as inputs.
To better model the uncertainty about the true model, we output a parameterized mean and diagonal covariance.
Then, we sample from the resulting normal distribution the difference from the previous state.
At each iteration, we train the model for 300 epochs with a learning rate of 0.0002 using Adam with default $\beta_1$ and $\beta_2$.
We found beneficial, to reduce the computation burden, to employ a stopping condition on the learning of the model, stopping the training if no improvement in the training loss is detected for 5 epochs.

For computing the approximate $Q^{\pi,\widehat{p}}$, we average the cumulative return obtained by rolling our the model for 20 rollouts composed of 25 steps.
The policy is learned using Adam with a learning rate of 0.008 and default $\beta$s.
We use a discount factor of 0.99 and $q=2$ in the computation of the weighted MSE loss used in model learning.

\section{Algorithm}
In this appendix, we report additional details on the GAMPS algorithm.

\subsection{Time complexity of Algorithm \ref{alg:GAMPS}}
Let us consider that the algorithm is run for $K$ iterations on a dataset of $N$ trajectories.
 Suppose a parametric model class for which at most $E$ epochs are necessary for estimation. We define $H$ as the maximum length of a trajectory (or \emph{horizon}) and use an estimate of the Q-function derived by sampling $M$ trajectories from the estimated model, as described in Section~\ref{sec:ComputingQ}. 
For every iteration, we first compute the weights for every transition in every trajectory $\mathcal{O}(NH)$ and then estimate the corresponding forward model (order of $NHE$). 
Then, we estimate the gradient given all the transitions, using the trajectories imagined by the model for obtaining the value function (order of $NMH^2$). 
The overall time complexity of the algorithm is therefore $\mathcal{O}(KNHE + KNMH^2)$.
%

\subsection{Approximation of the value function} \label{sec:from_p_to_q}
We now briefly review in a formal way how $Q^{\pi,\widehat{p}}$ can be estimated. 
For the discrete case, the standard solution is to find the fixed point of the Bellman equation:
\begin{align}
	\label{eq:dynamic_programming_q}
    \widehat{Q}(s,a) &= r(s,a) + \gamma \mathop{\mathbb{E}}_{\substack{s' \sim \widehat{p}(\cdot|s,a) \\ a' \sim \pi(\cdot|s')}} \left[ \widehat{Q}(s',a') \right],
\end{align}
that can be found either in exact form using matrix inversion or by applying Dynamic Programming.
In the continuous case, one can use approximate dynamic programming. For instance, with one step of model unrolling, the state-action value function could be found by iteratively solving the following optimization problem:
\begin{equation}
	\label{eq:approximate_q}
    \widehat{Q} \in \argmin_{Q \in \mathcal{Q}} \sum_{i=1}^N \sum_{t=1}^{T_i-1} \left( Q(s_t^i,a_t^i) - \left(r(s_t^i,a_t^i) + \gamma \mathop{\mathbb{E}}_{\substack{\overline{s}_{t+1}^i \sim \widehat{p}(\cdot|s_t^i,a_t^i) \\ \overline{a}_{t+1}^i \sim \pi(\cdot|\overline{s}_{t+1}^i)}} \left[ Q(\overline{s}_{t+1}^i,\overline{a}_{t+1}^i) \right] \right)\right)^2. 
\end{equation}
The expected value in Equation \eqref{eq:approximate_q} can be approximated by sampling from the estimated model $\widehat{p}$ and the policy $\pi$. In practice, a further parameterized state-value function $\widehat{V}(s) \approx \mathbb{E}_{a \sim \pi(\cdot|s)} \left[ \widehat{Q}(s,a) \right]$ can be learned jointly with the action-value function. 

The third approach, that is the one employed in GAMPS, is to directly use the estimated model for computing the expected cumulative return starting from $(s,a)$. We can therefore use an \textit{ephemeral} Q-function, that is obtained by unrolling the estimated model and computing the reward using the known reward function. 

\section{A connection with reward-weighted regression}
Interestingly, our gradient-aware procedure for model learning has some connections with the reward-weighted regression (RWR,~\citealp{peters2007reinforcement}) techniques, that solve reinforcement learning problems by optimizing a supervised loss.
To see this, we shall totally revert our perspective on a non-Markovian decision process. 
First, we interpret a model $\widehat{p}_\phi$ parameterized by $\phi$ as a \emph{policy}, whose action is to pick a new state after observing a previous state-action combination. 
Then, we see the policy $\pi$ as the \emph{model}, that samples the transition to the next state given the output of $\widehat{p}_\phi$. Finally, the cumulative absolute score at time $t$ is the (non-markovian) reward. 
To strengthen the parallel, let us consider an appropriate transformation $u_c$ on the weights $\omega_t$.

We can now give an expectation-maximization formulation for our model learning problem as reward-weighted regression in this newly defined decision process: \begin{itemize}
\item \textit{E-step}:
\begin{equation}
    q_{k+1}(t)=\frac{p_{\phi_k}(s_{t+1}|s_t,a_t) u_{c_k}(\omega_{t})}{\sum_{t'} p_{\phi_k}(s_{t'+1}|s_{t'},a_{t'}) u_{c_k}(\omega_{t'})}
\end{equation}
\item \textit{M-step for model parameters}:
\begin{equation}
    \phi_{k+1}= \argmax \sum_t q_{k+1}(t) \log p_\phi(s_{t+1}|s_t,a_t)
\end{equation}
\item \textit{M-step for transformation coefficient}:
\begin{equation}
    \tau_{k+1} = \argmax_c \sum_t q_{k+1}(t) u_c(\omega_t)  
\end{equation}
\end{itemize}

Assuming a Gaussian-linear model $\widehat{p} = \mathcal{N}(s_{t+1}|\mu(s_t,a_t),\sigma^2\mathbf{I})$ and a transformation $u_c(x) = c \exp(-c x)$, the update for the model parameters and the transformation parameter is given by:
\begin{align}
\phi_{k+1} = \mathbf{(\Phi^TW\Phi)^{-1}\Phi^TWY} \\
\sigma^2_{k+1}=\mathbf{\|Y - \phi_{k+1}^T\Phi\|^2_W} \\
c_{k+1} = \frac{\sum_t u_c(\omega_t)}{\sum_{t'} u_c(\omega_{t'})\omega_{t'}}
\end{align}
where $\mathbf{\Phi, Y \text{ and } W}$ are the matrices containing, respectively, state-action features, successor state features and cumulative score weights on the diagonal.

As in the case of the original RWR, this learned exponentiation of the weights could in practice improve the performance of our algorithm. We leave this direction to future work.

\end{document}